\relax

\documentclass[letterpaper]{article} 
\usepackage{aaai21}  
\usepackage{times}  
\usepackage{helvet} 
\usepackage{courier}  
\usepackage[hyphens]{url}  
\usepackage{graphicx} 
\urlstyle{rm} 
\usepackage{natbib}  
\usepackage{caption} 
\frenchspacing  
\setlength{\pdfpagewidth}{8.5in}  
\setlength{\pdfpageheight}{11in}  

\pdfinfo{
	/Title (On the Privacy Risks of Model Explanations)
}
\usepackage[utf8]{inputenc} 
\usepackage[T1]{fontenc}    
\usepackage{url}            
\usepackage{booktabs}       
\usepackage{amsfonts}       
\usepackage{nicefrac}     
\usepackage{microtype}     
\usepackage{amsthm}

\newtheorem{lemma}{Lemma}[section]

\theoremstyle{definition}

\usepackage{amsmath}
\usepackage{amsfonts}
\usepackage{pdfpages}
\usepackage{standalone}
\usepackage{paralist}

\usepackage[inline]{enumitem}

\usepackage{comment}
\usepackage{tcolorbox}
\usepackage{algpseudocode} 
\usepackage{paralist}
\usepackage{algorithm}
\usepackage{multirow}
\usepackage{thmtools} 
\usepackage{thm-restate}
\usepackage{amssymb}
\specialcomment{yaircomment}{\begin{tcolorbox}[colback=red!5!white,colframe=red,title=Yair:]
		\begingroup\ttfamily}{\endgroup\end{tcolorbox}}
\specialcomment{martincomment}{\begin{tcolorbox}[colback=blue!5!white,colframe=blue,title=Martin:]
		\begingroup\ttfamily}{\endgroup\end{tcolorbox}}
\specialcomment{rezacomment}{\begin{tcolorbox}[colback=green!5!white,colframe=green,title=Reza:]
		\begingroup\ttfamily}{\endgroup\end{tcolorbox}}
\hyphenation{op-tical net-works semi-conduc-tor}
\renewcommand{\cal}[1]{\mathcal{#1}}
\newcommand{\R}{\mathbb{R}}
\DeclareMathOperator{\argmin}{\mathrm{argmin}}
\DeclareMathOperator{\argmax}{\mathrm{argmax}} 
\newcommand{\train}{\text{train}}
\newcommand{\LIME}{\text{LIME}}

\newcommand{\GRAD}{\mathit{GRAD}}
\newcommand{\SMOOTH}{\mathit{SMOOTH}}
\newcommand{\INTGRAD}{\mathit{INTG}}
\newcommand{\LRP}{\mathit{LRP}}

\newcommand{\euler}{\mathrm{e}}

\newcommand{\texas}{Texas\xspace}
\newcommand{\purchase}{Purchase\xspace}
\newcommand{\hospital}{Hospital\xspace}
\newcommand{\fishdog}{Dog/Fish\xspace}
\newcommand{\adult}{Adult\xspace}

\newcommand{\cifarTen}{CIFAR-10\xspace}
\newcommand{\cifarHundred}{CIFAR-100\xspace}

\newcommand{\Loss}{\mathit{Loss}}
\newcommand{\Prediction}{\mathit{Pred}}
\newcommand{\Explanation}{\mathit{Expl}}

\renewcommand{\cal}[1]{\mathcal{#1}}
\newcommand{\Var}{\mathit{Var}}
\newcommand{\poi}{\vec y}

\renewcommand{\exp}{\mathrm{e}}

\newcommand{\linearlyIndependent}{\texttt{linearlyIndependent}}
\newcommand{\True}{\textit{True}}
\newcommand{\False}{\textit{False}}
\newcommand{\revealedVectors}{\texttt{revealedVectors}}
\newcommand{\pickRandomPointIn}{\texttt{pickRandomPointIn}}
\newcommand{\dotappend}{\texttt{.append}}
\newcommand{\vectorsNotFound}{\texttt{vectorsNotFound}}
\newcommand{\getBasisOf}{\texttt{getBasisOf}}
\newcommand{\spanningVectors}{\texttt{spanningVectors}}

\def\compileFigures{0}
\newcommand{\filename}{main}
\newcounter{figuerNumber}

\usepackage{tikz}
\usetikzlibrary{arrows.meta}
\usetikzlibrary{calc}
\usetikzlibrary{shapes.misc}
\usetikzlibrary{patterns}

\if\compileFigures1
\usetikzlibrary{external}
\tikzexternalize[prefix=fig/] 
\fi

\usepackage{pgfplots}
\usepackage{pgfplotstable}
\usepgfplotslibrary{groupplots}

\usepackage{graphicx}
\usepackage{subcaption}

\usepackage{color}
\usepackage{colortbl}
\definecolor{grey}{rgb}{0.7,0.7,0.7}
\definecolor{darkgrey}{rgb}{0.5,0.5,0.5}
\definecolor{bgreen}{rgb}{0.7,1,0.7}

\definecolor{blue}{RGB}{68,1,84}
\definecolor{bblue}{RGB}{72,35,116}
\definecolor{red}{RGB}{32,144,140}
\definecolor{bred}{RGB}{34,167,13}
\definecolor{green}{RGB}{189,222,38}
\definecolor{bgreen}{RGB}{253,231,36}

 \definecolor{cmap1}{RGB}{0,0,3}
		\definecolor{cmap2}{RGB}{20,13,53}
		\definecolor{cmap3}{RGB}{59,15,111}
		\definecolor{cmap4}{RGB}{99,25,127}
		\definecolor{cmap5}{RGB}{140,41,128}
		\definecolor{cmap6}{RGB}{182,54,121}
		\definecolor{cmap7}{RGB}{221,73,104}
		\definecolor{cmap8}{RGB}{246,112,91}
		\definecolor{cmap9}{RGB}{253,159,108}
		\definecolor{cmap10}{RGB}{253,207,146}
		\definecolor{cmap11}{RGB}{251,252,191}

\pgfplotsset{
colormap={mygreen}{rgb255(0cm)=(254,254,254);rgb255(1cm)= (4,101,53)},
colormap={myblue}{rgb255(0cm)=(160,30,50); rgb255(1cm)=(39,59,129)},
colormap={myred}{rgb255(0cm)=(254,254,254); rgb255(1cm)=(160,30,50)},
colormap={mybred}{rgb255(0cm)=(254,254,254); rgb255(1cm)=(217,10,100)},
colormap={mybblue}{rgb255(0cm)=(217,10,100); rgb255(1cm)=(10,157,217)},
colormap={mybgreen}{rgb255(0cm)=(254,254,254); rgb255(1cm)=(98,159,67)}
}

\usepackage{graphicx}
\usepackage{xspace}
\usepackage{todonotes}

\setcounter{secnumdepth}{2}

\title{On the Privacy Risks of Model Explanations}
\author {
	Reza Shokri,\textsuperscript{\rm 1}
	Martin Strobel, \textsuperscript{\rm 1}
	Yair Zick \textsuperscript{\rm 2} \\
}
\affiliations {
	\textsuperscript{\rm 1} National University of Singapore \\
	\textsuperscript{\rm 2} University of Massachusetts, Amherst \\
	reza@comp.nus.edu.sg, mstrobel@comp.nus.edu.sg, yzick@umass.edu
}

\begin{document}
	
	\maketitle
	
	\begin{abstract}
		Privacy and transparency are two key foundations of trustworthy machine learning.  Model explanations offer insights into a model's decisions on input data, whereas privacy is primarily concerned with protecting information about the training data.  We analyze connections between model explanations and the leakage of sensitive information about the model's training set. We investigate the privacy risks of feature-based model explanations using {\em membership inference attacks}: quantifying how much model predictions plus their explanations leak information about the presence of a datapoint in the training set of a model. We extensively evaluate membership inference attacks based on feature-based model explanations, over a variety of datasets.  We show that backpropagation-based explanations can leak a significant amount of information about individual training datapoints.  This is because they reveal statistical information about the decision boundaries of the model about an input, which can reveal its membership. We also empirically investigate the  trade-off between privacy and explanation quality, by studying the perturbation-based model explanations.
	\end{abstract}
	
	\section{Introduction}\label{sec:introduction}

Black-box machine learning models are often used to make high-stakes decisions in sensitive domains. However, their inherent complexity makes it extremely difficult to understand the \emph{reasoning} underlying their predictions.
This development has resulted in increasing pressure from the general public and government agencies; several proposals advocate for deploying (automated) {\em model explanations}~\cite{goodman2017explanation}.  In recent years, novel explanation frameworks have been put forward; Google, Microsoft, and IBM now offer model explanation toolkits as part of their ML suites.\footnote{See \url{http://aix360.mybluemix.net/}, \url{https://aka.ms/AzureMLModelInterpretability} and \url{https://cloud.google.com/explainable-ai}.}  

Model explanations offer users additional information about how the model made a decision with respect to their data records. Releasing additional information is, however, a risky prospect from a privacy perspective.  The explanations, as functions of the model trained on a private dataset, might inadvertently leak information about the training set, beyond what is necessary to provide useful explanations.  Despite this potential risk, there has been little effort to analyze and address any data privacy concerns that might arise due to the release of model explanations.  This is where our work comes in. We initiate this line of research by asking the following question: \textbf{can an adversary leverage model explanations to infer private information about the training data?}

The established approach to analyze information leakage in machine learning algorithms is to take the perspective of an adversary and design an attack that recovers private information, thus illustrating the deficiencies of existing algorithms (e.g., \cite{aivodji2020model,long2017towards, sablayrolles2019white, Yeom2017}).  In this work, we use adversarial analysis to study existing methods. We focus on a fundamental adversarial analysis, called \emph{membership inference} \cite{Shokri2017a}. In this setting, the adversary tries to determine whether a datapoint is part of the training data of a machine learning algorithm. The success rate of the attack shows how much the model would leak about its individual datapoints. 

This approach is not specific to machine learning. \cite{homer2008resolving} demonstrated a successful membership inference attack on aggregated genotype data provided by the US National Institutes of Health and other organizations. This attack was successful despite the NIH witholding public access to their aggregate genome databases \cite{de2008snping}.  With respect to machine learning systems, the UK's information commissioners office explicitly states membership inference as a threat in its guidance on the AI auditing framework \cite{commissionerGuidance2020}. Beyond its practical and legal aspects, this approach is used to measure model information leakage \cite{Shokri2017a}. Privacy-preserving algorithms need to be designed to establish upper bounds on such leakage (notably using differential privacy algorithms, e.g., \cite{abadi2016deep}) 

\paragraph{Our Contributions}\label{sec:contrib} 
Our work is the first to extensively analyze the {\em data} privacy risks that arise from releasing model explanations, which can result in a trade-off between transparency and privacy. This analysis is of great importance, given that model explanations are required to provide transparency about model decisions, and privacy is required to protect sensitive information about the training data. We provide a comprehensive analysis of information leakage on major feature-based model explanations. We analyze both {\em backpropagation-based} model explanations, with an emphasis on gradient-based methods~\cite{Baehrens2009, Klauschen2015, Shrikumar2017, Sliwinski2019, sundararajan2017axiomatic} and \emph{perturbation-based} methods~\cite{Ribeiro2016should, Smilkov2017}. We assume the adversary provides the input query, and obtains the model prediction as well as the explanation of its decision. We analyze if the adversary can trace whether the query was part of the model's training set. 

For gradient-based explanations, we demonstrate \textbf{how and to what extent} backpropagation-based explanations leak information about the training data (Section~\ref{sec:thresholdExperiments}).  Our results indicate that backpropagation-based explanations are a major source of information leakage. We further study the effectiveness of membership inference attacks based on additional backpropagation-based explanations (including Integrated Gradients and LRP). These attacks achieve comparable, albeit weaker, results than attacks using gradient-based explanations.  

We further investigate \textbf{why} this type of model explanation leaks membership information (Section~\ref{sec:factorsForinformationleakage}). Note that the model explanation, in this case, is a vector where each element indicates the influence of each input feature on the model's decision. We demonstrate that the \emph{variance} of a backpropagation-based explanation (i.e, the variance of the influence vector across different features) can help identify the training set members. This link could be partly due to how backpropagation-based training algorithms behave upon convergence. The high variance of an explanation is an indicator for a point being close to a decision boundary, which is more common for datapoints outside the training set. During training the decision boundary is pushed away from the training points. 

This observation links the high variance of the explanation to an uncertain prediction and so indirectly to a higher prediction loss. Points close to the decision boundary have both an uncertain prediction and a high variance in their explanation. This insight helps to explain the leakage. High prediction and explanation variance is a good proxy for a higher prediction loss of the model around an input. This is a very helpful signal to the adversary, as membership inference attacks based on the loss are highly accurate \cite{sablayrolles2019white}: Points with a very high loss tend to be far from the decision boundary and are also more likely to be non-members. 
 
Further, our experiments on synthetic data indicate that the relationship between the variance of an explanation and training data membership is greatly affected by data dimensionality. For low dimensional data, membership is uncorrelated with explanation variance. These datasets are relatively dense. There is less variability for the learned decision boundary and members and non-members are equally likely to be close to it. Interestingly, not even the loss-based attacks are effective in this setting. Increasing the dimensionality of the dataset, and so decreasing its relative density, leads to a better correlation between membership and explanation variance. Finally, when the dimensionality reaches a certain point the correlation decreases again.  This decrease is inline with a decrease in training accuracy for the high dimensional data. Here, the model fails to learn. 

To provide a better analysis of the trade-off between privacy and transparency, we analyze perturbation-based explanations, such as SmoothGrad~\cite{Smilkov2017}.  We show that, as expected, these techniques are more resistant to membership inference attacks (Section~\ref{sec:pertubationExplanations}). We, however, attribute this to the fact that they rely on out-of-distribution samples to generate explanations. These out-of-distribution samples, however, can have undesirable effects on explanation fidelity \cite{slack2020fooling}. So, these methods can achieve privacy at the cost of the quality of model explanations. 

\paragraph{Additional results in supplementary material} 

In the supplementary material, we study another type of model explanation: the example-based method based on influence-functions proposed by \citet{koh2017understanding}. This method provides influential training datapoints as explanations for the decision on a particular point of interest (Appendix~\ref{sec:influence-functions}). This method presents a clear leakage of training data, and is far more vulnerable to membership inference attacks;
in particular, training points are frequently used to explain their own predictions (Appendix~\ref{sec:record}). Hence, for this method, we focus on a more ambitious objective of reconstructing the entire training dataset via \textbf{dataset reconstruction attacks}~\cite{dwork2017exposed}. 

The challenge here is to recover as many training points as possible.
Randomly querying the model does not recover many points. A few peculiar training data records --- especially mislabeled training points at the border of multiple classes --- have a strong influence over most of the input space. Thus, after a few queries, the set of reconstructed data points converges.  We design an algorithm that identifies and constructs regions of the input space where previously recovered points will not be influential (Appendix~\ref{sec:record_analysis}).  This approach avoids rediscovering already revealed instances and improves the attack's coverage.  
We prove a worst-case upper bound on the number of recoverable points and show that our algorithm is optimal in the sense that for worst-case settings, it recovers all discoverable datapoints.

Through empirical evaluation of example-based model explanations on various datasets (Appendix~\ref{sec:dataset-reconstruction}), we show that an attacker \textbf{can reconstruct (almost) the entire dataset for high dimensional data}. For datasets with low dimensionality, we develop another heuristic: by adaptivley querying the previously recovered points, we recover significant parts of the training set. 
Our success is due to the fact that in the data we study, the graph structure induced by the influence function over the training set, tends to have a small number of large strongly connected components, and the attacker is likely to recover at least all points in one of them.

We also study the influence of dataset size on the success of membership inference for example-based explanations. Finally, as unusual points tend to have a larger influence on the training process, we show that the data of \textbf{minorities is at a high risk of being revealed}. 
	
\section{Background and Preliminaries}\label{sec:preliminaries} 
We are given a labeled {\em dataset} $\cal X \subseteq \R^n$, with $n$ features and $k$ labels. The labeled dataset is used to train a {\em model} $c$, which maps each {\em datapoint} $\vec x$ in $\R^n$ to a distribution over $k$ {\em labels}, indicating its belief that any given label fits $\vec x$. Black-box models often reveal the label deemed {\em likeliest} to fit the datapoint. The model is defined by a set of {\em parameters} $\theta$ taken from a {\em parameter space} $\Theta$. 
We denote the model as a function of its parameters as $c_\theta$. A model is trained to empirically minimize a {\em loss function} over the training data. The loss function $L:\cal X\times \Theta \to \R$ takes as input the model parameters $\theta$ and a point $\vec x$, and outputs a real-valued loss $L(\vec x,\theta)\in \R$. The objective of a machine-learning algorithm is to identify an {\em empirical loss minimizer} over the parameter space $\Theta$:
\begin{align}
\hat \theta \in \argmin_{\theta \in \Theta} \frac{1}{|\cal X|} \sum_{\vec x \in \cal X}L(\vec x,\theta)\label{eq:total-loss}
\end{align}

\subsection{Model Explanations}\label{sec:transparency-measures}
As their name implies, model explanations explain model decisions on a given {\em point of interest} (POI) $\poi \in \R^n$. 
An explanation $\phi$ takes as input the dataset $\cal X$, labels over $\cal X$ --- given by either the true labels $\ell:\cal X \to [k]$ or by a trained model $c$ --- and a {\em point of interest} $\poi  \in \R^n$. 
Explanation methods sometimes assume access to additional information, such as active access to model queries (e.g. \cite{adler2016auditing,Datta2016,Ribeiro2016should}), a prior over the data distribution \cite{Baehrens2009}, knowledge of the model class (e.g. that the model is a neural network \cite{ancona2017dnns,shrikumar2017deeplift,sundararajan2017axiomatic}, or that we know the source code \cite{datta2017programs,Ribeiro2018}). 
We assume that the explanation function $\phi(\cal X,c,\poi,\cdot)$ is \emph{feature-based} (here the $\cdot$ operator stands for potential additional inputs), and often refer to the explanation of the POI $\poi$ as $\phi(\poi)$, omitting its other inputs when they are clear from context.

The $i$-th coordinate of a feature-based explanation, $\phi_i(\poi)$ is the degree to which the $i$-th feature influences the label assigned to $\poi$. 
Generally speaking, high values of $\phi_i(\poi)$ imply a greater degree of effect; negative values imply an effect for {\em other labels}; a $\phi_i(\poi)$  close to $0$ normally implies that feature $i$ was largely irrelevant. \citet{Ancona2017} provide an overview of feature-based explanations (also called attribution methods). Many feature-based explanation techniques are implemented in the \textsc{innvestigate} library\footnote{\url{https://github.com/albermax/innvestigate}} \cite{Alber2018} which we use in our experiments. Let us briefly review the explanations we analyze in this work.

\subsubsection{Backpropagation-Based Explanations} Backpropagation-based methods rely on a small number of backpropagations through a model to attribute influence from the prediction back to each feature. The canonical example of this type of explanation is the gradient with respect to the input features~\cite{Simonyan2013a}, we focus our analysis on this explanation. Other backpropagation-based explanations have been proposed \cite{Baehrens2009, Klauschen2015, Shrikumar2017, Sliwinski2019, Smilkov2017, sundararajan2017axiomatic}.

\paragraph{Gradients} 
\citet{Simonyan2013a} introduce gradient-based explanations to visualize image classification models, i.e. $\phi_{i}(\poi) = \frac{\partial c}{\partial x_i}(\poi)$. The authors utilize the absolute value of the gradient, i.e. $\left|\frac{\partial c}{\partial x_i}(\poi)\right|$; however, outside image classification, it is reasonable to consider negative values, as we do in this work.
We denote gradient-based explanations as $\phi_{\GRAD}$. \citet{Shrikumar2017} propose setting $\phi_i(\poi) = y_i \times \frac{\partial c}{\partial x_i}(\poi)$ 
as a method to enhance numerical explanations. 
Note that since an adversary would have access to $\poi$, releasing its Hadamard product with $\phi_{\GRAD}(\poi)$ is equivalent to releasing $\phi_{\GRAD}(\poi)$. 

\paragraph{Integrated Gradients} 
\newcommand{\BL}{\mathit{BL}}

\citet{sundararajan2017axiomatic} argue that instead of focusing on the gradient it is better to compute the average gradient on a linear path to a baseline $\vec x_{\BL}$ (often $\vec x_{\BL} = \vec 0$). This approach satisfies three desirable axioms: sensitivity, implementation invariance and a form of completeness. 
Sensitivity means that given a point $\vec x \in \cal X$ such that $x_i \ne x_{\BL,i}$ and $c(\vec x) \ne c(\vec x_{\BL})$, then $\phi_i(\vec x) \ne 0$; completeness means that $\sum_{i =1}^n \phi_i(\vec x) = c(\vec x) - c(\vec x_{\BL})$. 
Mathematically the explanation can be formulated as
\[
\phi_{\INTGRAD}(\vec{x})_i \triangleq (x_i - \vec x_{\BL,i})  \cdot  \left. \int_{\alpha = 0}^1 \frac{\partial c(\vec{x}^\alpha)}{\partial \vec{x}^\alpha_i} \right|_{\vec{x}^\alpha = \vec{x}+\alpha (\vec{x} - \vec x_{\BL})}.
\]  

\paragraph{Guided Backpropagation}
Guided Backpropagation \cite{Springenberg2014} is a method specifically designed for networks with ReLu activations. It is a modified version of the gradient where during backpropagation only paths are taken into account that have positive weights and positive ReLu activations. Hence, it only considers positive evidence for a specific prediction. While being designed for ReLu activations it can also be used for networks with other activations.

\paragraph{Layer-wise Relevance Propagation (LRP)} \citet{Klauschen2015} use backpropagation to map \emph{relevance} back from the output layer to the input features. 
LRP defines the relevance in the last layer as the output itself and in each previous layer the relevance is redistributed according to the weighted contribution of the neurons in the previous layer to the neurons in the current layer.  The final attributions for the input $\vec{x}$ are defined  as the attributions of the input layer. We refer to this explanation as $\phi_{\LRP}(\vec{x})$.

\subsubsection{Perturbation-Based Explanations} 
Perturbation-based methods query the to-be-explained model on many perturbed inputs. They either treat the model as a black-box~\cite{Datta2015influence, Ribeiro2016should}, need predictions for counterfactuals~\cite{Datta2015influence}, or `smooth' the explanation~\cite{Smilkov2017}. They can be seen as local linear approximations of a model. 

\paragraph{SmoothGrad} We focus our analysis on SmoothGrad~\cite{Smilkov2017}, which generates multiple samples by adding Gaussian noise to the input and releases the averaged gradient of these samples. Formally for some $k\in \mathbb{N}$,
\[
\phi_{\SMOOTH}(\vec x) = \frac{1}{k} \sum_k \nabla c(\vec x + \mathcal{N}(0, \sigma)),
\]
where $\mathcal{N}$ is the normal distribution and $\sigma $ is a hyperparameter.

\paragraph{LIME} The LIME (Local Interpretable Model-agnostic Explanations) method \cite{Ribeiro2016should} creates a local approximation of the model via sampling. Formally it solves the following optimization problem:
\[
\phi_{\LIME}(\vec x) = \argmin_{g \in G} \mathcal{L}( g,c,\pi_{\vec x} ) +\Omega(g),
\]
where $G$ is a set of simple functions, which are used as explanations, $\mathcal{L}$ measures the approximation quality by $g$ of $c$ in the neighborhood of $\vec x$ (measured by $\pi_{\vec x}$ ) and $\Omega$ regularizes the complexity of $g$. 
While the LIME framework allows for an arbitrary local approximation in practice most commonly used is a linear approximation with Ridge regularization. 

	\subsection{Membership Inference Attacks}\label{sec:thresholdDescription}
We assume the attacker has gained possession of a set of datapoints $S \subset R^n$, and would like to know which ones are members of the training data. The goal of a membership inference attack is to create a function that accurately predicts whether a point $\vec x \in S$ belongs to the training set of $c$. The attacker has a prior belief how many of the points in $S$ were used for training. 
In this work we ensure that half the members of $S$ are members of the training set (this is known to the attacker), thus random guessing always has an accuracy of 50\%, and is the threshold to beat.

Models tend to have lower loss on members of the training set. Several works have exploited this fact to define simple loss-based attacks \cite{long2017towards, sablayrolles2019white, Yeom2017}. The idea is to define a threshold $\tau$: an input $\vec x$ with a loss $L(\vec x,\theta)$  lower than $\tau$ is considered a member; an input with a loss higher is considered a non-member.
\[
\text{Membership}_{\Loss,\tau}(\vec x) = \begin{cases}
\True &\text{if } L(\vec x,\theta) \leq \tau \\
\False &\text{otherwise }  \\
\end{cases}
\]
\citet{sablayrolles2019white} show that this attack is optimal given an optimal threshold $\tau_\text{opt}$, under some assumptions.  
However, this attack is infeasible when the attacker does not have access to the true labels or the model's loss function.

Hence, we propose to generalize threshold-based attacks to allow different sources of information. For this we use the notion of variance for a given vector $\vec{v} \in \R^n$:
\begin{align*}
\Var(\vec{v}) \triangleq &\sum_{i=1}^n (v_i-\mu_{\vec{v}})^2 &\text{where } \mu_{\vec v} = \frac{1}{n}\sum_{i=1}^n v_i
\end{align*}
Explicitly, we  consider \begin{enumerate*}[label=(\roman*)] \item a threshold on the prediction variance and \item a threshold on the explanation variance \end{enumerate*}. The target model usually provides access to both these types of information. Note, however, a target model might \emph{only release the predicted label and an explanation}, making only explanation-based attacks feasible.

Our explanation-based threshold attacks work in a similar manner to other threshold-based attack models: $\poi$ is considered a member iff $\Var(\phi(\poi))\le \tau$. 

\begin{align*}
\text{Membership}_{\Prediction,\tau}(\vec x) &= \begin{cases}
\True &\text{if } \Var(c_\theta(\vec x)) \geq \tau \\
\False &\text{otherwise }  \\
\end{cases}\\
\text{Membership}_{\Explanation,\tau}(\vec x) &= \begin{cases}
\True &\text{if } \Var(\phi(\vec x)) \leq \tau \\
\False &\text{otherwise }  \\
\end{cases}
\end{align*}

Intuitively, if the model has a very low loss then its prediction vector will be dominated by the true label. These vectors have higher variance than vectors where the prediction is equally distributed among many labels (indicating model uncertainty). 
This inference attack breaks in cases where the loss is very high because the model is decisive but wrong. However, as we demonstrate below, this approach offers a fairly accurate attack model for domains where loss-based attacks are effective. Hence, attacks using prediction variance alone still constitute a serious threat.
The threshold attack based on explanation variance are similarly motivated. When the model is certain about a prediction, it is also unlikely to change it due to a small local perturbation. Therefore, the influence and attribution of each feature are low, leading to a smaller explanation variance. 
For points closer to the decision boundary, changing a feature affects the prediction more strongly, leading to higher explanation variance. The loss minimization during training ``pushes'' points away from the decision boundary. In particular, models using $\tanh$, sigmoid, or softmax activation functions tend to have steeper gradients in the areas where the output changes. Training points generally don't fall into these areas.\footnote{The high variance described here results from higher absolute values, in fact instead of the variance an attacker could use the 1-norm. In our experiments, there was no difference between using 1-norm and using variance; we decided to use variance to be more consistent with the prediction based attacks.}
The crucial part for all threshold-based attacks is obtaining the threshold $\tau$. We consider two scenarios:
\setdefaultleftmargin{0.5cm}{}{}{}{}{}
\begin{enumerate}
	\item \textbf{Optimal threshold} For a given set of members and non-members there is a threshold $\tau_\text{opt}$ that achieves the highest possible prediction accuracy for the attacker. 
	This threshold can easily be obtained when datapoint membership is known. Hence, rather than being an actually feasible attack, using  $\tau_\text{opt}$ helps estimating the worst case privacy leakage.
	\item \textbf{Reference/Shadow model(s)} This setting assumes that the attacker has access to some labeled data from the target distribution. The attacker trains $s$ models on that data and calculates the threshold for these reference (or shadow) models. In line with Kerckhoffs’s principle \cite{petitcolas2011kerckhoffs} we assume that the attacker has access to the training hyper parameters and model architecture. This attack becomes increasingly resource intensive as $s$ grows. For our experiments we choose $s \in \{1,3\}$. This is a practically feasible attack if the attacker has access to similar data sources. 
\end{enumerate}

	\section{Privacy Analysis of Backpropagation-Based Explanations}\label{sec:thresholdExperiments} 
In this section we describe and evaluate our membership inference attack on gradient-based explanation methods. 
  We use the \purchase and \texas datasets in \cite{Nasr2018}; we also test \cifarTen and \cifarHundred \cite{sablayrolles2019white}, the \adult dataset \cite{Dua2017} as well as the \hospital dataset \cite{Strack2014}. 
  The last two datasets are the only binary classification tasks considered. 
  Where possible, we use the same training parameters and target architectures as the original papers (see Table~\ref{tab:targetDataset} for an overview of the datasets). We study four types of information the attacker could use: loss, prediction variance, gradient variance and the SmoothGrad variance. 
  
   \begin{table}[h!]
  	\centering{	
  		\caption{Overview of the target datasets for membership inference}
  		\label{tab:targetDataset}
  		\begin{tabular}{l r r l r }
  			\toprule
  			Name & Points & Features & Type & \# Classes \\
  			\midrule 
  			\purchase  & 197,324 & 600    & Binary & 100 \\
  			\texas     & 67,330  & 6,170  & Binary & 100 \\ 
  			\cifarHundred & 60,000   & 3,072  &Image  & 100 \\ 
  			\cifarTen  & 60,000   & 3,072  & Image & 10 \\ 
  			\hospital  & 101,766 & 127    & Mixed  & 2\\ 
  			\adult     & 48,842  & 24     & Mixed  & 2\\ 
  			\bottomrule
  	\end{tabular}}
  \end{table}

   \begin{table}[h!]
   	\setlength{\tabcolsep}{3pt}
	\centering{	
		\caption{The average training and testing accuracies of the target models.}
		\label{tab:targetAccurcaies}
		\begin{tabular}{l r r r r r r }
			\toprule
			 			& \purchase & \texas & CIFAR  	& CIFAR 	& \hospital  	& \adult  \\
			 			&  &  & -100  	& -10 	&   	&   \\
			\midrule 
			Train  	& 1.00		& 0.98		&	0.97		& 0.93			& 0.64				& 0.85 \\
			Test   	& 0.75		& 0.52		&	0.29		& 0.53			& 0.61				& 0.85 \\
			\bottomrule

	\end{tabular}
}
\end{table}

  \subsection {General setup}
 For all datasets, we first create one big dataset by merging the original training and test dataset, to have a large set of points for sampling. Then, we randomly sample four smaller datasets that are not overlapping. We use the smaller sets to train and test four target models and conduct four attacks. In each instance, the other three models can respectively be used as shadow models. We repeat this process 25 times, producing a total of 100 attacks for each original dataset. Each small dataset is split 50/50 into a training set and testing set. Given the small dataset,  the attacker has an a priori belief that 50\% of the points are members of the training set, which is the common setting for this type of attack \cite{Shokri2017}.
  \subsection{Target datasets and architectures}
  
  The overview of the datasets is provided in Table~\ref{tab:targetDataset} and an overview of the target models accuracies in Table~\ref{tab:targetAccurcaies}.
  
  \subsubsection{Purchase dataset}
  The dataset originated from the ``Acquire Valued Shoppers Challenge'' on Kaggle\footnote{\url{https://www.kaggle.com/c/acquire-valued-shoppers-challenge/data}}. 
The goal of the challenge was to use customer shopping history to predict shopper responses to offers and discounts. 
For the original membership inference attack, \citet{Shokri2017} create a simplified and processed dataset, which we use as well. Each of the 197,324 records corresponds to a customer. The dataset has 600 binary features representing customer shopping behavior. The prediction task is to assign customers to one of 100 given groups (the labels). 
This learning task is rather challenging, as it is a multi-class learning problem with a large number of labels; moreover, due to the relatively high dimension of the label space, allowing an attacker access to the prediction vector --- as is the case in \cite{Shokri2017} --- represents significant access to information.
We sub-sampled smaller datasets of 20,000 points i.e. 10,000 training and testing points for each model. We use the same architecture as \cite{Nasr2018}, namely a four-layer fully connected neural network with $\tanh$ activations. The layer sizes are [1024, 512, 256, 100]. We trained the model of 50 epochs using the Adagrad optimizer with a learning rate of 0.01 and a learning rate decay of 1e-7.

  \subsubsection{Texas hospital stays}
  The Texas Department of State Health Services released hospital discharge data public use files spanning from 2006 to 2009.\footnote{\url{https://www.dshs.texas.gov/THCIC/Hospitals/Download.shtm}} The data is about inpatient status
at various health facilities. There are four different groups of attributes in each record: general information (e.g., hospital id, length of stay, gender, age, race), the diagnosis, the procedures the patient underwent, and the external causes of injury. The goal of the classification model is to predict the patient’s primary procedures based on the remaining attributes (excluding the secondary procedures). The dataset is filtered to include only the 100 most common procedures.
The features are transformed to be binary resulting in 6,170 features and 67,330 records. 
We sub-sampled smaller datasets of 20,000 points i.e. 10,000 training and testing points for each model. As the dataset has only 67,330 points we allowed resampling of points. We use the same architecture as \cite{Nasr2018}, namely a five-layer fully connected neural network with $\tanh$ activations. The layer sizes are [2048, 1024, 512, 256, 100]. We trained the model of 50 epochs using the Adagrad optimizer with a learning rate of 0.01 and a learning rate decay of 1e-7. 
  \subsubsection{\cifarTen and \cifarHundred}
\cifarTen  and \cifarHundred are well-known benchmark datasets for image classification \cite{krizhevsky2009learning}. They consists of 10 (100) classes of $32 \times 32 \times 3$ color images, with 6,000 (600) images per class. The datasets are usually split in 50,000 training and 10,000 test images. 
For \cifarTen, we use a small convolutional network with the same architecture as in \cite{Shokri2017, sablayrolles2019white}, it has two convolutional layers with max-pooling, and two dense layers, all with Tanh activations. We train the model for 50 epochs with a learning rate of 0.001 and the Adam optimizer. Each dataset has 30,000 points (i.e. 15,000 for training). Hence, we only have enough points to train one shadow model per target model.   
For \cifarHundred, we use a version of Alexnet \cite{krizhevsky2012imagenet}, it has five convolutional layers with max-pooling, and to dense layers, all with ReLu activations. We train the model for 100 epochs with a learning rate of 0.0001 and the Adam optimizer. Each dataset has 60,000 points (i.e. 30,000 for training). Hence, we don't have enough points to train shadow models. However, with a smaller training set, there would be too few points of each class to allow for training.
  \subsubsection{UCI Adult (Census income)}  This dataset is extracted from the 1994 US Census database \cite{Dua2017}. It contains 48,842 datapoints. It is based on 14 features (e.g., age, workclass, education). The goal is to predict if the yearly income of a person is above 50,000 \$. We transform the categorical features into binary form resulting in 104 features.
  We sub-sampled smaller datasets of 5,000 points i.e. 2,500 training and testing points for each model. For the architecture, we use a five-layer fully-connected neural network with Tanh activations. The layer sizes are [20, 20, 20, 20, 2]. We trained the model of 20 epochs using the Adagrad optimizer with a learning rate of 0.001 and a learning rate decay of 1e-7. 
  \subsubsection{Diabetic Hospital} The dataset contains data on diabetic
  patients from 130 US hospitals and integrated delivery networks
  \cite{Strack2014}. We use the modified version described in \cite{koh2017understanding}
  where each patient has 127 features which are demographic (e.g.
  gender, race, age), administrative (e.g., length of stay), and medical
  (e.g., test results); the prediction task is readmission within 30 days (binary).
  The dataset contains 101,766 records from which we sub-sample
  balanced (equal numbers of patients from each class) datasets of size 10,000. Since the original dataset is heavily biased towards one class, we don't have enough points to train shadow models.
  As architecture, we use a four-layer fully connected neural network with Tanh activations. The layer sizes are [1024, 512, 256, 100]. We trained the model for 1,000 epochs using the Adagrad optimizer with a learning rate of 0.001 and a learning rate decay of 1e-6.

\begin{figure*}
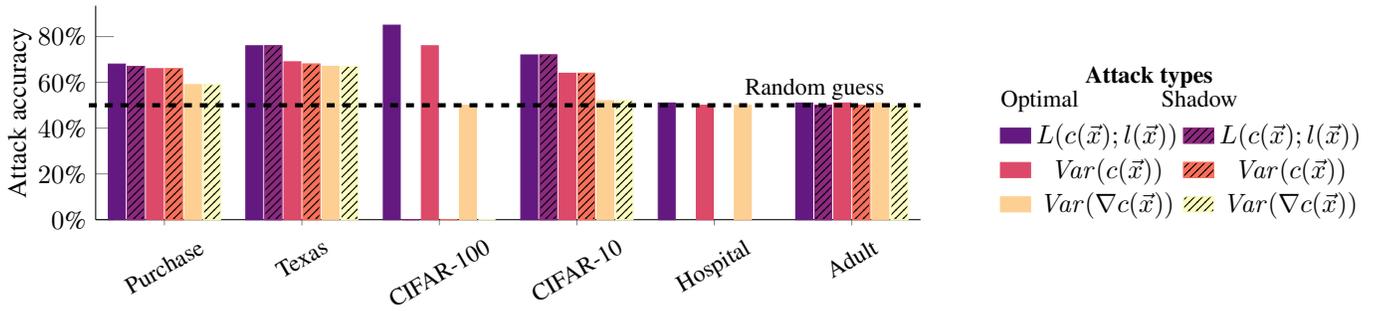

	\centering
	\if\compileFigures1
	\input{figure_scripts/fig_threshold_attacks}
	\else
	\includegraphics[]{fig/\filename-figure\thefiguerNumber.pdf}
	\stepcounter{figuerNumber}
	\fi
	\caption{Results for the threshold-based attacks using different attack information sources. The \textsc{Optimal} attack uses the optimal threshold; the \textsc{Shadow} trains a shadow model on data from the same distribution, and uses an optimal threshold for the shadow model. Using three such models results in nearly optimal attack accuracy.}
	\label{fig:thresholdattacks}
\end{figure*}
\subsection{Evaluation of main experiment}
\paragraph{Explanation-based attacks}
As can be seen in Figure~\ref{fig:thresholdattacks}, gradient-based attacks (as well as other backpropagation-based methods, as further discussed in Section~\ref{app:backpropbased}) on the \purchase and \texas datasets were successful. This result is a \emph{clear proof of concept}, that model explanations are exploitable for membership inference. 
However, the attacks were ineffective for the image datasets; gradient variance fluctuates wildly between individual images, making it challenging to infer membership based on explanation variance. 
\paragraph{Loss-based and predictions-based attacks}
When loss-based attacks are successful, attacks using prediction variance are nearly as successful. These results demonstrate that it is not essential to assume that the attacker knows the true label of the point of interest.  
\paragraph{Types of datasets}
The dataset type (and model architecture) greatly influences attack success. For both binary datasets (\texas and \purchase), all sources of information pose a threat. 
On the other hand, for the very low dimensional \hospital and \adult datasets, none of the attacks outperform random guessing. This lack of performance may be because the target models do not overfit to the training data (see Table~\ref{tab:targetAccurcaies}), which generally limits its vulnerability to adversarial attacks \cite{Yeom2017}.
\paragraph{Optimal threshold vs. shadow models}
Shadow model-based attacks compare well to the optimal attack, with attacks based on three shadow models performing nearly at an optimal level; this is in line with results for loss-based attacks \cite{sablayrolles2019white}.  

\paragraph{Considering the entire explanation vector}\label{app:learning}
In the attacks above, we used only the variance of the explanations.
Intuitively, when the model is certain about a prediction because it is for a training point, it is unlikely to change the prediction with small local perturbation. Hence, the influence (and attribution) of each feature is low. It has a smaller explanation variance. 
For points closer to the decision boundary, changing a feature affects the prediction more strongly.  The variance of the explanation for those points should be higher. The loss minimization during training tries to ``push'' points away from the decision boundary. Especially, models using $\tanh$, sigmoid, or softmax activations have steep gradients in the areas where the output changes. Training points generally don't fall into these areas.\footnote{The high variance described here results from higher absolute values. Instead of the variance, an attacker could use the 1-norm. In our experiments, there was no difference between using 1-norm and using the variance. We decided to use variance to be more consistent with the attacks based on the prediction threshold.}
Hence, explanation variance is a sufficient parameter for deploying a successful attack. To further validate this claim, we conduct an alternative attack using the entire explanation vector as input. 

The fundamental idea is to cast membership inference as a {\em learning problem}: the attacker trains an \emph{attack model} that, given the output of a \emph{target model} can predict whether or not the point $\vec x$ was used during the training phase of $c$. 
The main drawback of this approach is that it assumes that the attacker has partial knowledge of the initial training set to train the attack model. \citet{Shokri2017} circumvent this by training {\em shadow models} (models that mimic the behavior of $c$ on the data) and demonstrate that comparable results are obtainable even when the attacker does not have access to parts of the initial training set. As we compare the results to the optimal threshold, it is appropriate to compare with a model that is trained using parts of the actual dataset. This setting allows for a stronger attack.

The specific attack architecture,  we use in this section, is a neural network inspired by the architecture of \citet{Shokri2017}. The network has fully connected layers of sizes $[r, 1024, 512, 64, 256, 64, 1]$, where $r$ is the dimension of the respective explanation vector.
We use ReLu activations between layers and initialize weights in a manner similar to \citet{Shokri2017} to ensure a valid comparison between the methods. We trained the attack model for 15 epochs using the Adagrad optimizer with a learning rate 0.01 of and a learning rate decay of 1e-7. As data for the attacker, we used 20,000 explanations generated by the target 10,000 each for members and non-members. The training testing split for the attacker was 0.7 to 0.3. We repeated the experiment 10 times. We omitted \cifarHundred for computational reasons.

As can be seen in Figure~\ref{fig:thresholdVsentire}, attacks based on the entire explanation perform slightly better than attacks based only on the variance. However, they are qualitatively the same and still perform very poorly for \cifarTen, \adult, and \hospital. 

\begin{figure}


	\centering
	\if\compileFigures1
	\input{figure_scripts/fig_threshold_vs_entire}
	\else
	\includegraphics[]{fig/\filename-figure\thefiguerNumber.pdf}
	\stepcounter{figuerNumber}
	\fi
	\caption{A comparison between attacks using only the variance of the gradient and attacks using the entire gradient explanation as input.}
	\label{fig:thresholdVsentire}
\end{figure}

\subsection{Combining different information sources}\label{sec:differentSources}
The learning attacks described in the previous paragraph allow for a combination of different information sources. For example, an attacker can train an attack network using both the prediction and the explanation as input.  Experiments on combining the three information sources (explanation, prediction, and loss) lead to outcomes identical to the strongest used information source. Especially if the loss is available to an attacker, we could not find evidence that either the prediction vector or an explanation reveals additional information.

\subsection{Results for other backpropagation-based explanations}\label{app:backpropbased}
Besides the gradient, several other explanation methods based on backpropagation have been proposed. We conducted the attack described in Section~\ref{sec:thresholdDescription} replacing the gradient with some other popular of these explanation methods. The techniques are all implemented in the \textsc{innvestigate} library\footnote{\url{https://github.com/albermax/innvestigate}} \cite{Alber2018}. An in-depth discussion of some of these measures, and the relations between them, can also be found in \cite{Ancona2017}.
As can be seen in Figure~\ref{fig:backpropattacks} on the \purchase, \texas, and \cifarTen datasets, the results for other backpropagation based methods are relatively similar to the attack based on the gradient. Integrated gradients performing most similar to the gradient. For \adult, \hospital and \cifarHundred small-scale experiments indicated that this type of attack would not be successful for these explanations as well, we omitted the datasets from further analysis.   

\begin{figure}


	\centering
	\if\compileFigures1
	\input{figure_scripts/fig_backpropattacks}
	\else
	\includegraphics[]{fig/\filename-figure\thefiguerNumber.pdf}
	\stepcounter{figuerNumber}
	\fi
	\caption{Results for the threshold-based attacks using different backpropagation-based explanations as sources of information for the attacker.}
	\label{fig:backpropattacks}
\end{figure}

\section{Analysis of factors of information leakage}\label{sec:factorsForinformationleakage}
In this section, we provide further going analysis to validate our hypothesis and broaden understanding.

\subsection{The Influence of the Input Dimension}\label{app:dimensionality}
The experiments in Section~\ref{sec:thresholdExperiments} indicate that $Var(\nabla c(\vec x))$, and $||\nabla c(\vec x)||_1$ predict training set membership. 
In other words, high absolute gradient values at a point $\vec x$ signal that $\vec x$ is {\em not} part of the training data: the classifier is uncertain about the label of $\vec x$, paving the way towards a potential attack.
Let us next study this phenomenon on synthetic datasets, and the extent to which an adversary can exploit model gradient information in order to conduct membership inference attacks.  
The use of artificially generated datasets offers us control over the problem complexity, and helps identify important facets of information leaks.

To generate datasets, we use the Sklearn python library.\footnote{the \texttt{make\_classification} function \url{https://scikit-learn.org/stable/modules/generated/sklearn.datasets.make\_classification.html}} 

For $n$ features, the function creates an $n$-dimensional hypercube, picks a vertex from the hypercube as center of each class, and samples points normally distributed around the centers. In our experiments, the number of classes is either 2 or 100 while the number of features is between 1 to 10,000 in the following steps,
\begin{align*}
n \in \{&1, 2, 5, 10, 14, 20, 50, 100, 127, 200, 500, 600, \\ 
& 1000, 2000, 3072, 5000, 6000, 10000\}.
\end{align*} 
For each experiment, we sample 20,000 points and split them evenly into training and test set. We train a fully connected neural network with two hidden layers with fifty nodes each, the $\tanh$ activation function between the layers, and softmax as the final activation. 
The network is trained using Adagrad with learning rate of 0.01 and learning rate decay of $1\euler-7$ for 100 epochs.

Increasing the number of features does not increase the complexity of the learning problem as long as the number of classes is fixed.
However, the dimensionality of the hyper-plane increases, making its description more complex. Furthermore, for a fixed sample size, the dataset becomes increasingly sparse, potentially increasing the number of points close to a decision boundary. Increasing the number of classes increases the complexity of the learning problem. 

Figure~\ref{fig:correlation_synthetic} shows the correlation between $||\nabla c(\vec x)||_1$ and training membership. 
For datasets with a small number of features ($\leq 10^2$) there is almost no correlation. This corresponds to the failure of the attack for \adult and the \hospital dataset. 
When the number of features is in the range ($10^3 \sim 10^4$) there is a correlation, which starts to decrease when the data dimension is further increased. 
The number of classes seems to play only a minor role; however, a closer look at training and test accuracy reveals that the actual behavior is quite different. 
For two classes and a small number of features training and testing accuracy are both high (almost 100\%), around $n = 10^2$ the testing accuracy starts to drop (the model overfits)  and at $n = 10^3$ the training accuracy starts to drop as well reducing the overfitting. 
For 100 classes the testing accuracy is always low and only between  $10^3 \leq n \leq 10^4$  the training accuracy is high, leading to overfitting, just on a lower  level. 
We also conduct experiments with networks of smaller/larger capacity, which have qualitatively similar behavior. 
However, the interval of $n$ in which correlation exists and the amount of correlation varies (see Figure~\ref{fig:cap} in Appendix~\ref{app:capacity}).  

\begin{figure}


	\centering
	\if\compileFigures1	
	\input{figure_scripts/fig_synthetic_datasets}
	\else
	\includegraphics[]{fig/\filename-figure\thefiguerNumber.pdf}
	\stepcounter{figuerNumber}
	\fi
	\caption{\small The correlation between $||\nabla c(\vec x)||_1$ and training membership for synthetic datasets for increasing number of  features $n$ and different number of classes $k \in \{ 2, 100 \}$}
	\label{fig:correlation_synthetic}
\end{figure}

\subsection{Using individual thresholds}
\citet{sablayrolles2019white} proposed an attack where the attacker obtains a specific threshold for each point (instead of one per model). However, to be able to obtain such a threshold, the attacker would need to train shadow models including the point of interest. This situation would require knowledge of the true label of the point. This conflicts with the assumption that when using explanations (or predictions) for the attack the attacker does not have access to these true labels. Furthermore, \citet{sablayrolles2019white} results suggest that this attack only very mildly improves performance. 

\subsection{Influence of overfitting}

\citet{Yeom2017} show that {\em overfitting significantly influences the accuracy of membership inference attacks}. To test the effect of overfitting, we vary the number of iterations of training achieving different accuracies. 
In line with previous findings for loss-based attacks, our threshold-based attacks using explanations and predictions work better on overfitted models; see Figure~\ref{fig:overfitting}. 

\begin{figure}


	\centering
	\if\compileFigures1
	\input{figure_scripts/fig_overfitting}
	\else
	\includegraphics[]{fig/\filename-figure\thefiguerNumber.pdf}
	\stepcounter{figuerNumber}
	\fi
	\caption{The attack accuracy of the attacker increases with increasing number of epochs.}
	\label{fig:overfitting}
\end{figure}

	\section{Privacy Analysis of Perturbation-Based Explanations}\label{sec:pertubationExplanations}
Neither the threshold-based attacks described in Section~\ref{sec:thresholdDescription} nor the learning-based attacks in Section~\ref{app:learning} outperform random guessing when given access to the SmoothGrad \cite{Smilkov2017}. Given that SmoothGrad is using sampling rather than a few backpropagations, it is inherently different from the other explanations we considered so far. We discuss the differences in this section.

\subsection{Attacks using LIME  explanation}\label{app:lime}
As a second perturbation-based method, we looked at the popular explanation method LIME \cite{ribeiro2016programs}. The type of attack is the same as described in Section~\ref{sec:thresholdDescription}. We use an optimal threshold based on the variance of the explanation. However, the calculation of LIME explanations takes considerably longer than the computation of other methods we considered. Every single instance computes for a few seconds. Running experiments with 10,000 or more explanations would take weeks to months. To save time and energy, we restricted the analysis of the information-leakage of LIME to smaller-scale experiments where the models train on 1,000 points, and the attacks run on 2,000 points each (1,000 members and 1,000 non-members). We also repeated each experiment only 20 times instead of 100 as for the others. Furthermore, given that the experiments for the other explanations indicated that only for \purchase and \texas the attack was likely to be successful, we restricted our experiments to these two datasets. Figure~\ref{fig:lime} shows the results for these attacks. To ensure that it is not the different setting that determines the outcome, we also rerun the attacks for the gradient and SmoothGrad explanations, as well as the attack using the prediction variance in this new setting. Neither LIME nor SmoothGrad outperforms random guessing. For the \purchase dataset, however, the attack using the gradient variance fails as well. As a final interesting observation, which we are unable to explain at the moment: For the \texas dataset, the gradient-based attack performs better than on the larger dataset (shown in Figure~\ref{fig:thresholdattacks}) it even outperforms the attack based on the prediction in this specific setting. Something we want to explore further in future works.
\begin{figure}
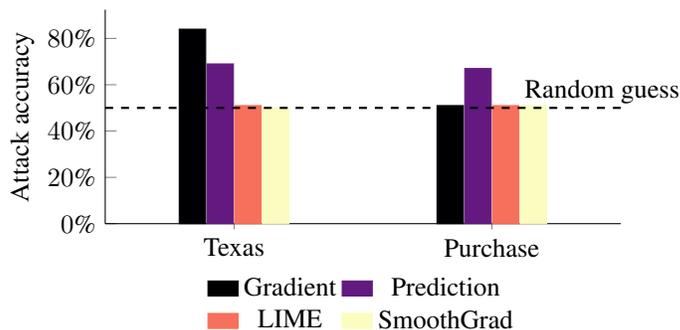

	\centering
	\if\compileFigures1
	\input{figure_scripts/fig_lime}
	\else
	\includegraphics[]{fig/\filename-figure\thefiguerNumber.pdf}
	\stepcounter{figuerNumber}
	\fi
	\caption{Attacks using \LIME or SmoothGrad do not outperform random guessing in any of our experiments.}
	\label{fig:lime}
\end{figure}

\subsection{Analysis}
While it is entirely possible that perturbation-based methods are vulnerable to membership inference, we conjecture that this is not the case. 
This conjecture is due to an interesting connection between perturbation-based model explanations and the {\em data-manifold hypothesis} \cite{fefferman2016testing}.
The data-manifold hypothesis states that ``data tend to lie near a low dimensional manifold'' \cite[p. 984]{fefferman2016testing}. Many works support this hypothesis \cite{belkin2003laplacian, brand2003charting,narayanan2010sample}, and use it to explain the pervasiveness of adversarial examples \cite{gilmer2018adversarial}. To the best of our knowledge, little is known on how models generally perform outside of the data manifold. 
In fact, it is not even clear how one would measure performance of a model on points outside of the training data distribution: they do not have any natural labels. 
Research on creating more robust models aims at decreasing model sensitivity to small perturbations, including those yielding points outside of the manifold. However, robustness results in vulnerability to membership inference \cite{song2019privacy}.   
Perturbation-based explanation methods have been criticized for not following the distribution of the training data and violating the manifold hypothesis \cite{kumar2020problems, sundararajan2019shapley}. \citet{slack2020fooling} demonstrate how a malicious actor can differentiate normal queries to a model from queries generated by LIME and QII, and so make a biased model appear fair during an audit. 
Indeed, the resilience of perturbation-based explanations to membership inference attacks may very well stem from the fact that query points that the model is not trained over, and for which model behavior is completely unspecified. 
One can argue that the fact that these explanations do not convey membership information is a major {\em flaw} of this type of explanations. Given that the results in the previous section indicate that for many training points the model heavily overfits --- to the extent that it effectively ``memorizes'' labels --- an explanation should reflect that.

	\section{Broader Impact}

AI governance frameworks call for transparency and privacy for machine learning systems.\footnote{See, for example, the white paper by the European Commission on Artificial Intelligence -- A European approach to excellence and trust:  \url{https://ec.europa.eu/info/sites/info/files/commission-white-paper-artificial-intelligence-feb2020_en.pdf}}
Our work investigates the potential negative impact of explaining machine learning models, in particular, it shows that offering model explanations may come at the cost of user privacy. 
The demand for automated model explanations led to the emergence of model explanation suites and startups. However, none of the currently offered model explanation technologies offer any provable privacy guarantees. This work has, to an extent, arisen from discussion with colleagues in industry and AI governance; both expressed a great deal of interest in the potential impact of our work on the ongoing debate over model explainability and its potential effects on user privacy. 

One of the more immediate risks is that a real-world malicious entity uses our work as the stepping stone towards an attack on a deployed ML system. While our work is still preliminary, this is certainly a potential risk. Granted, our work is still at the proof-of-concept level, and several practical hurdles must be overcome in order to make it into a fully-fledged deployed model, but nevertheless the risk exists. In addition, to the best of our knowledge, model explanation toolkits have not been applied commercially on high-stakes data. Once such explanation systems are deployed on high-stakes data (e.g., for explaining patient health records or financial transactions), a formal exploration of their privacy risks (as is offered in this work) is necessary.

Another potential impact --- which is, in the authors' opinion, more important --- is that our work raises the question whether there is an {\em inevitable} conflict between explaining ML models --- the celebrated ``right to explanation'' --- and preserving user privacy. This tradeoff needs to be communicated beyond the ML research community, to legal scholars and policymakers. Furthermore, some results on example-based explanations suggest that the explainability/privacy conflict might disparately impact minority groups: their data is either likelier to be revealed, else they will receive low quality explanations. We do not wish to make a moral stand in this work: explainability, privacy and fairness are all noble goals that we should aspire to achieve. Ultimately, it is our responsibility to explain the capabilities --- and limitations --- of technologies for maintaining a fair and transparent AI ecosystem to those who design policies that govern them, and to various stakeholders.
Indeed, this research paper is part of a greater research agenda on transparency and privacy in AI, and the authors have initiated several discussions with researchers working on AI governance. 
The tradeoff between privacy and explainability is not new to the legal landscape \cite{banisar2011}; we are in fact optimistic about finding model explanation methods that do not violate user privacy, though this will likely come at a cost to explanation quality.

Finally, we hope that this work sheds further light on what constitutes a {\em good} model explanation. The recent wave of research on model explanations has been recently criticized for lacking a focus on actual usability \cite{kaur2020interpreting}, and for being far from what humans would consider helpful. It is challenging to mathematically capture human perceptions of explanation quality. However, our privacy perspective does shed some light on when explanations are not useful: explanations that offer no information on the model are likely to be less human usable (note that from our privacy perspective, we do not want private user information to be revealed, but revealing some model information is acceptable).
	\section{Related Work and  Conclusions}\label{sec:conclusion}
\citet{Milli2018} show that gradient-based explanations can be used to reconstruct the underlying model; in recent work, a similar reconstruction is demonstrated based on counterfactual explanations \cite{aivodji2020model}  this serves as additional evidence of the vulnerability of transparency reports. However, copying the behavior of a model is different from the inference of its training data. While the former is unavoidable, as long the model is accessible, the latter is more likely an undesired side effect of current methods.
There exists some work on the defense against privacy leakage in advanced machine learning models.  \citet{abadi2016deep} and \citet{Papernot2018a} have designed frameworks for differentially private training of deep learning models, and \citet{Nasr2018} proposes adversarial regularization. 
However, training \emph{accurate} and privacy-preserving models is still a challenging research problem. Besides, the effect of these techniques (notably the randomness they induce) on model transparency is unknown. 
Finally, designing safe transparency reports is an important research direction: one needs to release explanations that are both safe and formally useful. 
For example, releasing no explanation (or random noise) is guaranteed to be safe, but is not useful; example-based methods are useful but cannot be considered safe. Quantifying the quality/privacy trade-off in model explanations will help us understand the capacity to which one can explain model decisions while maintaining data integrity.
	
	\bibliography{abb,literature} 
	
	\clearpage
	\newpage
	\appendix
	\section{The Influence of the Input Dimension}\label{app:capacity}
In Figure~\ref{fig:cap} we report experiments on the influence of the input dimension with networks of smaller/larger capacity, which have qualitatively similar behavior to our baseline model. However, the interval of $n$ in which correlation exists and the amount of correlation varies.

\begin{figure}[ht]
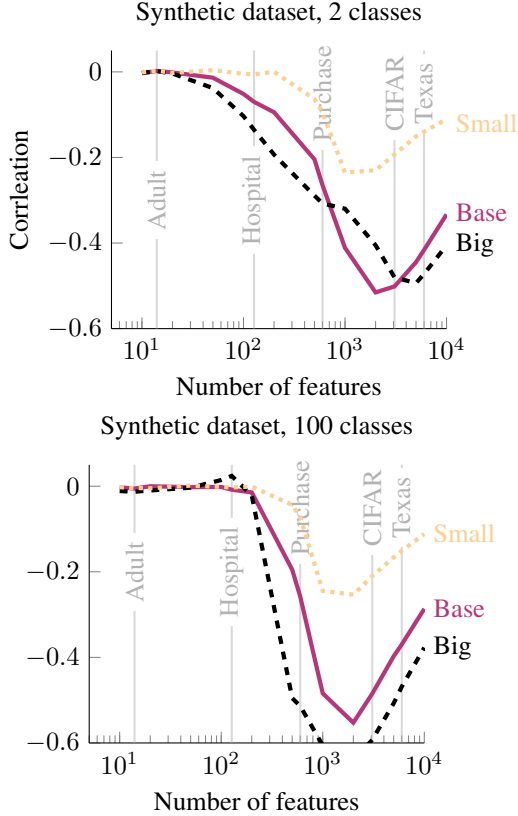

	\centering
	\if\compileFigures1
	\input{figure_scripts/fig_changing_capacity}
	\else
	\includegraphics[]{fig/\filename-figure\thefiguerNumber.pdf}
	\stepcounter{figuerNumber}
	\includegraphics[]{fig/\filename-figure\thefiguerNumber.pdf}
	\stepcounter{figuerNumber}
	\fi
	\caption{\small The correlation between $||\nabla c(\vec x)||_1$ and training membership for synthetic datasets for increasing number of  features $n$ and different number of classes $k \in \{ 2, 100 \}$ for three different networks. The "Small" has one hidden layer with 5 nodes, "Base" has two layers with 50 nodes each, "Big" has 3 layers with 100 nodes each. }
	\label{fig:cap}
\end{figure}

\section{Example-Based Explanations}\label{sec:influence-functions}
To illustrate the privacy risk of example-based model explanations, we focus on the approach proposed by \citet{koh2017understanding}. It aims at identifying influential {\em datapoints}; that is, given a point of interest $\poi$, find a subset of points from the training data $\phi(\poi)\subseteq \cal X$ that explains the label $c_{\hat{\theta}}(\poi)$, where $\hat \theta$ is a parameterization induced by a training algorithm $A$. 
It selects a training point $\vec x_\train$ by measuring the importance of $\vec x_\train$ for determining the prediction for $\poi$.\newline
To estimate the effect of $\vec x_\train$ on $\poi$, the explanation measures the difference in the loss function over $\poi$ when the model is trained with and without $\vec x_\train$. Let 
$
\theta_{\vec x_\train} \triangleq  A(\mathcal{X} \backslash \{\vec x _\train \})
$
, in words, $\theta_{\vec x_\train}$ is induced by training algorithm $A$ given the dataset excluding $\vec x_\train$. The influence of $\vec x_\train$ on $\poi$ is then 
\begin{align}
I_{\poi}(\vec x_\train) \triangleq  L(\poi,\theta_{\vec x_\train}) - L(\poi,\hat{\theta}). 
\label{eq:influence-value}
\end{align}
The \citeauthor{koh2017understanding} explanation releases the $k$ points with the highest absolute influence value according to Equation \eqref{eq:influence-value}. 
Additionally, it might release the influence of these $k$ points (the values of $I_{\poi}(\vec z)$ as per Equation~\eqref{eq:influence-value}), which allows users to gauge their relative importance.

\section{Membership-inference via Example-Based Model Explanations}\label{sec:record}

In this section we analyze how many training points an attacker can recover with access to example-based explanations. We focus on logistic regression models, for which example-based explanations were originally used \cite{koh2017understanding}. The results can be generalized to neural networks by focusing only on the last layer, we demonstrate this by considering a binary image classification dataset used in \cite{koh2017understanding}; we call this dataset \fishdog.  We also focus only on binary classification tasks. 
While technically the approaches discussed in previous sections could be applied to this setting as well, example-based explanations allow for stronger attacks. Specifically, they explicitly reveal training points, and so offer the attacker certainty about a points' training set membership (i.e. no false positives). Formally, we say a point $\vec y$ \emph{reveals} point $\vec x$ if for all $z \in \cal X$, 
$
|I_{\vec y} (\vec x)| \geq |I_{\vec y} (\vec z)| 
$.
In other words, $\vec x$ will be offered if one requests a example-based explanation for $\vec y$. 
Similarly, $\vec y$ \emph{$k$-reveals} point $\vec x$ if there is a subset $S \subseteq \cal{X}, |S| = k-1$ such that  $\forall z \in \cal X\backslash S\colon$  
$
|I_{\vec y} (\vec x)| \geq |I_{\vec y} (\vec z)|. 
$
Hence, $\vec x$ will be one of the points used to explain the prediction of $\vec y$ if one releases the top $k$ most influential points.
A point $\vec x \in \mathcal X$ that ($k$-)reveals itself, is called \emph{($k$-)self-revealing}.

\paragraph{Revealing membership for example-based explanations}\label{sec:record-based}
While for feature-based explanations the attacker needs to rely on indirect information leakage to infer membership, for example-based explanations, the attacker's task is relatively simple. Intuitively, a training point should be influential for its own prediction, so an example-based explanation of a training set member is likely to contain the queried point, revealing membership. We test this hypothesis via experiments, i.e. for every point $\vec x \in \mathcal{X}$ in the training set we obtain the $k \in \{ 1, 5,10\}$ most influential points for the prediction $f_\theta(\vec x)$ and see if $\vec x$ is one of them. 

\subsection{Experimental setup} 
While the theoretical framework of influence functions described in Section~\ref{sec:preliminaries} can be applied to an arbitrary classification task, it requires the training of as many classifiers as there are points in the training set in practice. \citet{koh2017understanding} propose an approximation method, but currently we only have access of its implementation for binary logistic regression models. 
However, for logistic regression, retraining the model (i.e. computing an {\em exact solution}) is actually faster than running the approximation algorithm. For larger models and datasets, these explanation methods seems intractable at the moment.     
This limits our experiments to the \adult and \hospital datasets from our previous experiments for which we train binary logistic regression models.  
Furthermore, given the relatively long time it takes to compute all models, we reduce the size of the training set to 2,000 points, so we can run one experiment within a few hours (see Figure~\ref{fig:varyDatasetSize} for an exploration of the effect of dataset size). 
\citet{koh2017understanding} use a specifically created dataset containing 2400 $299\times299$-pixel dog and fish images, extracted from ImageNet \cite{Russakovsky2015}. 
The images were pre-processed by taking the output of the penultimate layer of a neural network trained on ImageNet.\footnote{Specifically, the authors used an Inceptionv3 architecture, for which pre-trained models are available for Keras \url{https://keras.io/applications/\#inceptionv3}.} 
These latent representations were then used to train a linear regression model. 
For more variety we include this dataset here as the \emph{\fishdog} dataset. Arguably, this dataset doesn't contain particularly private information, but it can be seen as representative of image data in general. It also allows us to attack a pre-trained deep neural network which last layer was fine tuned. This type of transfer learning for small dataset becomes increasingly popular.  Given the small size of this dataset we randomly split it into 1800 training and 600 test points for each experiment.
For each dataset we repeat the experiment 10 times.
\begin{figure}
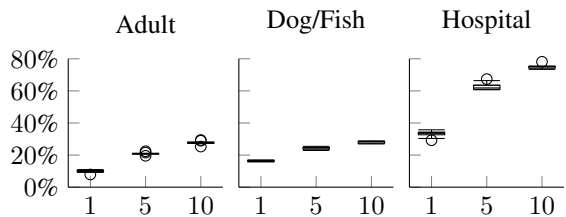

	\centering
	\if\compileFigures1
	\include{figure_scripts/fig_record_membership}
	\else
	\includegraphics[]{fig/\filename-figure\thefiguerNumber.pdf}
	\stepcounter{figuerNumber}
	\hspace{-0.3cm}
	\includegraphics[]{fig/\filename-figure\thefiguerNumber.pdf}
	\stepcounter{figuerNumber}
	\hspace{-0.3cm}
	\includegraphics[]{fig/\filename-figure\thefiguerNumber.pdf}
	\stepcounter{figuerNumber}
	\fi
	\vspace{-0.6cm}
	\caption{\small \% of training points revealed as their own explanation, when $k \in \{1,5,10 \}$ most influential points are revealed.}
	\label{fig:record_membership}
\end{figure}

\begin{figure}
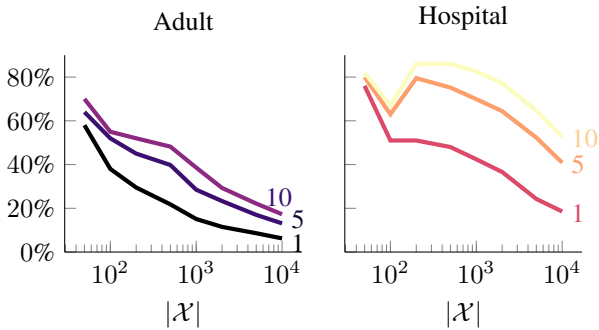

	\centering
	\if\compileFigures1
	\input{figure_scripts/fig_varyDatasetSize}
	\else
	\includegraphics[]{fig/\filename-figure\thefiguerNumber.pdf}
	\stepcounter{figuerNumber}
	\includegraphics[]{fig/\filename-figure\thefiguerNumber.pdf}
	\stepcounter{figuerNumber}
	\fi
	\caption{\small \% of $k$-self-revealing points depending on the size of the dataset ($|\mathcal{X}|$) for $k \in \{ 1, 5,10\}$ Increasing datasize the relative number of self-revealing points decreases. However, the decrease is relatively slow (the figure is in logarithmic scale) and the absolute number of revealed points is actually increasing.}
	\label{fig:varyDatasetSize}
\end{figure}

\subsection{Evaluation} Figure~\ref{fig:record_membership} shows the percentage of training points that would be revealed by explaining themselves. 
For the standard setting where the top 5 most influential points are revealed, a quarter of each dataset is revealed on average. 
For the \hospital dataset, two thirds of training points are revealed. 
Even when just the most influential point would be released for the \adult dataset (which exhibits the lowest success rates), 10\% of the members are revealed through this simple test. 

As mentioned in Appendix~\ref{sec:influence-functions}, the influence score of the most influential points might be released to a user as well. 
In our experiments, the influence scores are similarly distributed between training and test points (i.e. members and non-members);
however, the distribution is significantly  different once we ignore the
revealed training points. Figure~\ref{fig:record_membership_hist} illustrates this for one instance of the \fishdog dataset; similar results hold for the other datasets. An attacker can exploit these differences, using
techniques similar to those discussed in Section~\ref{sec:thresholdDescription}; however, we focus on other attack models in this work.

\begin{figure}
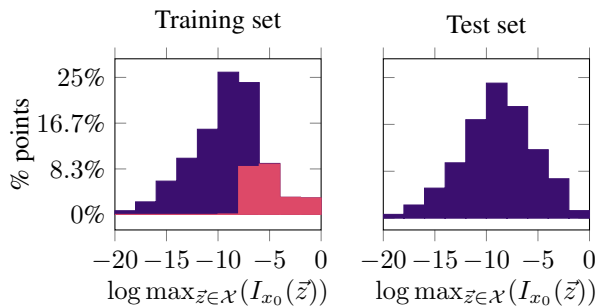

	\centering
	\if\compileFigures1
	\input{figure_scripts/fig_record_membership_hist}
	\else
	\includegraphics[]{fig/\filename-figure\thefiguerNumber.pdf}
	\stepcounter{figuerNumber}
	\includegraphics[]{fig/\filename-figure\thefiguerNumber.pdf}
	\stepcounter{figuerNumber}
	\fi
	\caption{\small Histogram of the influence of the most influential points for every point in the training set(left) and test set (right) on a logarithmic scale (for one instance of \fishdog). The points in the training set for which the membership inference is successful (i.e. $\vec{x}_0 = \argmax_{\vec{z} \in \mathcal{X}}(I_{x_0}(\vec{z}))$) are highlighted in red (dark).
	}
	\label{fig:record_membership_hist}
\end{figure}  

	\subsection{Minority and outlier vulnerability to inference attacks for example-based explanations}\label{app:recMem} Visual inspection of datapoints for which membership attacks were successful indicates that outliers and minorities are more susceptible to being part of the explanation. Images of animals (a bear, a bird, a beaver) eating fish (and labeled as such) were consistently revealed (as well as a picture containing a fish as well as a (more prominent) dog that was labeled as fish). 
We label three ``minorities'' in the dataset to test the hypothesis that pictures of minorities are likelier to be revealed (Table~\ref{tab:minFishDog}).

\begin{table}
	\centering
	\begin{subtable}{\columnwidth}
		\centering
		\begin{tabular}{l c c c c}
			\toprule
			&\#points & $k=1$ & $k=5$ & $k=10$ \\
			\midrule
			Whole dataset       	& 100\%	&  16\% &  24\% &  28\% \\
			Birds               	& 0.6\%	&  64\% &  84\% &  86\% \\
			Clownfish           	& 1\%	&  14\% &  30\% &  33\% \\
			Lion fish           	& 1\%	&   9\% &  31\% &  43\% \\
			\bottomrule
		\end{tabular}  
		\caption{Disclosure likelihood by type in the \fishdog dataset.}
		\label{tab:minFishDog}
	\end{subtable}
	\begin{subtable}{\columnwidth}
		\centering
		\begin{tabular}{l c c c c}
			\toprule
			& \% of data & $k=1$ & $k=5$ & $k=10$ \\
			\midrule
			Whole dataset       	& 100\%	&  33\% &  63\% &  75\% \\
			Age 0 -10           	& 0.2\%	&  61\% & 100\% & 100\% \\
			Age 10 -20          	& 0.7\%	&  23\% &  62\% &  95\% \\
			Caucasian           	& 75\%	&  32\% &  62\% &  74\% \\
			African Amer.    	& 19\%	&  36\% &  66\% &  78\% \\
			Hispanics           	& 2\%	&  40\% &  61\% &  77\% \\
			Other race          	& 1.5\%	&  30\% &  60\% &  79\% \\
			Asian Amer.     	& 0.6\%	&  33\% &  70\% &  95\% \\
			
			\bottomrule
		\end{tabular}  
		\caption{Disclosure likelihood by age and race in the \hospital dataset.}
		\label{tab:minHospital}
	\end{subtable}
	\begin{subtable}{\columnwidth}
		\centering
		\begin{tabular}{l c c c c}
			\toprule
			& \% of data & $k=1$ & $k=5$ & $k=10$ \\
			\midrule
			Whole dataset       	& 100\%	&  10\% &  21\% &  28\% \\
			Age 10 -20          	& 5\%	&   0\% &   1\% &   1\% \\
			White               	& 86\%	&  10\% &  21\% &  28\% \\
			Black               	& 10\%	&   9\% &  16\% &  19\% \\
			A.-I.-E.  	& 1\%	&   8\% &  23\% &  32\% \\
			Other               	& 0.8\%	&  16\% &  32\% &  43\% \\
			A.-P.-I.  	& 3.\%	&  21\% &  40\% &  48\% \\
			\bottomrule
		\end{tabular}  
		\caption{Disclosure likelihood by age and race in the \adult dataset (A.-I.-E.: Amer-Indian-Eskimo; A.-P.-I.:Asian-Pac-Islander ).}
		\label{tab:minAdult}
	\end{subtable}
	\caption{\small Minority populations are more vulnerable to being revealed by the \citeauthor{koh2017understanding} method.}\label{tab:minorities}
\end{table}
With the exception of $k = 1$  (for lion and clown fish), minorities are likelier to be revealed. While clownfish (which are fairly ``standard'' fish apart from their distinct coloration) exhibit minor differences from the general dataset, birds are more than three times as likely to be revealed.
The \hospital dataset exhibits similar trends (Table~\ref{tab:minHospital}).
Young children, which are a small minority in the dataset, are revealed to a greater degree; ethnic minorities also exhibit slightly higher rates than Caucasians. This is mirrored (Table~\ref{tab:minAdult}) for the \adult dataset, with the exception of the age feature and the ``Black'' minority. Young people are actually particularly safe from inference. Note however, that young age is a highly predictive attribute for this classification. Only 3 out of the 2,510 entries aged younger than 20 have an income of more than 50K and none of them made it in any of our training set. Similarly only 12\% of the Black people in the dataset  (vs. 25\% over all) have a positive label, making this attribute more predictive an easier to generalize. 

While our findings are preliminary, they are quite troubling in the authors' opinion: 
transparency reports aim, in part, to {\em protect} minorities from algorithmic bias; however, data minorities are exposed to privacy risks through their implementation.
Our findings can be explained by earlier observations that training set outliers are likelier to be ``memorized'' and thus less generalized \cite{Carlini2018}; however, such memorization leaves minority populations vulnerable to privacy risks.

	\section{Theoretical Analysis For Dataset Reconstruction}\label{sec:record_analysis}
Let us now turn our attention to a stronger type of attack; rather than inferring the membership of specific datapoints, we try to recover the {\em training dataset}.

Given a parameter $\theta = (\vec w, b) \in \R^n\times \R $ a logistic regression model is defined as
\[
f_\theta (\vec x)= \frac{1}{1+\exp^{-\vec w^T \vec x +b }}. 
\]
For a point $\vec y \in R^n$ with label $l(\vec y) \in \{0,1\}$ we define the loss of $f(\vec y)_\theta$ as 
\[
L(\vec y,\theta) = (1-f_\theta(\vec y))^{l(\vec z)}(f_\theta(\vec y))^{1-l(\vec z)},
\]
which corresponds to the standard maximal likelihood formulation. 
As logistic regression does not allow for a closed form optimum, it is generally trained via gradient ascent. 
We assume we are given a fixed training regime $A$ (i.e. fixed parameters for number of steps, learning rate etc.). 

Let $f_\theta$ be the model induced by $\mathcal X$ and 
$$\mathcal{F}_{\mathcal{X}} = \{f_{\theta_{\vec x}}| f_{\theta_{\vec x}} \text{ is induced by } \mathcal{X}\backslash \{ \vec x \}, \vec x \in \mathcal{X} \}$$ 
be the set of functions induced by omitting the training points. 
We can reformulate the influence of point $\vec x$ on point $\vec y$ (assuming $l(\vec y ) = 0$) as
\begin{align*}
	I_{\vec y} (\vec x) &= L(y,\theta) - L(y,\theta_{\vec x}) \\ 
	&= f_\theta(y) - f_{\theta_{\vec x}}(y) \\ 
	&=  \frac{1}{1+\exp^{-\vec w^T \vec y +b }} -  \frac{1}{1+\exp^{-\vec w_{\vec x}^T \vec y +b_{\vec x }}}
\end{align*}
The condition that $\vec y$ reveals $\vec x$ is thus equivalent to ensuring that for all $\vec z \in \cal X$, $|I_{\vec y} (\vec x)| \geq |I_{\vec y} (\vec z)|$. In the case of linear regression this simply implies that
\begin{align*}
	|\frac{1}{1+\exp^{-\vec w^T \vec y +b }} -  \frac{1}{1+\exp^{-\vec w_{\vec x}^T \vec y +b_{\vec x }}}|\\
	\geq |\frac{1}{1+\exp^{-\vec w^T \vec y +b }} -  \frac{1}{1+\exp^{-\vec w_{\vec z}^T \vec y +b_{\vec z}}}|
\end{align*}
which can be simplified to a linear constraint (whose exact form depends on whether the terms in absolute values are positive or negative).

\subsection{Bounds on number of revealable points}\label{app:lowerBoundRecord}
It is relatively easy to construct examples in which only two points in the dataset can be revealed (see Figure~\ref{fig:bound_illustration}). In fact, there are specific instances in which only a single datapoint can be revealed (see Lemma~\ref{lemma:lowerBound}), however these cases are neither particularly insightful nor do they reflect real-world settings. On the other side, there exists datasets where, independent of the number of features, the entire dataset can be recovered. The right side of  Figure~\ref{fig:bound_illustration} illustrates such an example.

\begin{figure}[t!]
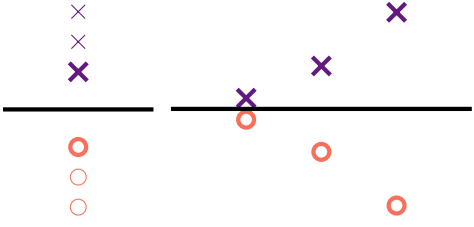

	\centering
	\if\compileFigures1
	\input{figure_scripts/fig_lower_bound_illustration}
	\input{figure_scripts/fig_upper_bound_illustration}
	\else
	\includegraphics[]{fig/\filename-figure\thefiguerNumber.pdf}
	\stepcounter{figuerNumber}
	\includegraphics[]{fig/\filename-figure\thefiguerNumber.pdf}
	\stepcounter{figuerNumber}
	\fi
	\caption{\small Illustrations of datasets for which only two (left) or all (right) points can be revealed under standard training procedures.}
	\label{fig:bound_illustration}
\end{figure}

The following Lemma characterizes the situations in which only a single point of the dataset can be revealed for $X \subseteq \R$. The conditions for higher dimensions follow from this.
\begin{lemma}\label{lemma:lowerBound}
	Given a training set $\mathcal X$ let $f_\theta$ and $\mathcal{F}_\mathcal{X}$ be induced by $\mathcal{X}$ with  $|\mathcal{F}_\mathcal{X}| \geq 2$, then one of the following statements is true
	\begin{enumerate}
		\item $\forall \vec x \in \mathcal{X}\colon w = w_{\vec x}$ and $ (b \geq b_{\vec{ x}}, \forall x \in \mathcal{X} ) \lor (b \leq b_{\vec{ x}}, \forall x \in \mathcal{X} ) $ (i.e all functions in $\mathcal F _\mathcal X$ are shifted in one direction of $f_\theta$),
		\item $\exists \vec y \in \R^n\colon \forall \vec x  \in \mathcal{X}\colon w_{\vec x}\vec y + b_{\vec x} = w\vec y + b $ (i.e all functions in $\mathcal{F}_\mathcal X$ intersect with  $f_\theta$ in the same point) ,
		\item $\forall \vec x \in \mathcal{X}\colon w = w_{\vec x}$ and there exists a numbering of the points in $\mathcal{X}$ such that $b_1 \leq b_2 \leq \dots \leq b_k \leq b \leq b_{k+1}, \dots , b_{m}$ such that 
		$b \leq  \log(\frac{1}{2}(e^{b_1}+e^{b_m})$.    
		\item at least two points can be revealed.
	\end{enumerate} 
\end{lemma}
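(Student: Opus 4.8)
The plan is to reduce everything to the gap functions $g_{\vec x}(y) \triangleq f_\theta(y) - f_{\theta_{\vec x}}(y)$. Writing $\sigma(z) = 1/(1+e^{-z})$, every induced model is a sigmoid of an affine function of the scalar query, $f_\theta(y) = \sigma(wy - b)$ and $f_{\theta_{\vec x}}(y) = \sigma(w_{\vec x} y - b_{\vec x})$, and with $l(\vec y) = 0$ we have $I_{\vec y}(\vec x) = g_{\vec x}(y)$. Thus ``$\vec y$ reveals $\vec x$'' is exactly ``$\vec x \in \argmax_{\vec z \in \mathcal X}|g_{\vec z}(y)|$'', and proving that at least two points are revealable is the same as showing the winner map $y \mapsto \argmax_{\vec z}|g_{\vec z}(y)|$ is non-constant on $\R$. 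I would prove the contrapositive: assume a single point $\vec x^\star$ maximises the gap magnitude at every query $y$, and deduce that configuration (1), (2), or (3) must hold.

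First I would handle the parallel case $w_{\vec x} = w$ for all $\vec x$. Since $\partial g_{\vec x}(y)/\partial b_{\vec x} = \sigma'(wy - b_{\vec x}) > 0$, at each fixed $y$ the map $b_{\vec x} \mapsto g_{\vec x}(y)$ is strictly increasing, so $|g_{\vec x}(y)|$ is maximised by one of the two extreme-intercept points, those with smallest intercept $b_1$ or largest $b_m$. If all intercepts lie weakly on one side of $b$ the same extreme wins for every $y$; this is configuration (1). Otherwise $b_1 < b < b_m$, and setting $t = e^{wy} \in (0,\infty)$ with $B = e^b,\ B_1 = e^{b_1},\ B_m = e^{b_m}$, a direct computation collapses the comparison between $|g_{\vec x_m}(y)|$ and $|g_{\vec x_1}(y)|$ to the sign of the \emph{affine} function $N(t) = (B_1 + B_m - 2B)\,t + \bigl(2 B_1 B_m - B(B_1 + B_m)\bigr)$. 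Both extremes are revealed iff $N$ has a zero in $(0,\infty)$, i.e.\ iff its slope and constant term differ in sign; the single-winner case is therefore a threshold comparison of $b$ against a logarithmic mean of $b_1$ and $b_m$, of the kind recorded in configuration (3). The only labour here is the reduction to $N(t)$ and translating its coefficient signs into the stated $\log \tfrac12(e^{b_1}+e^{b_m})$ bound.

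The harder case is the non-parallel one, where $w_{\vec x} \neq w$ for some $\vec x$. Each leave-one-out logit line $w_{\vec x} y - b_{\vec x}$ meets the base line $wy - b$ at a single abscissa $y^\star_{\vec x}$, where $g_{\vec x}$ vanishes and flips sign, so no point can win the magnitude contest near its own crossing. I would study the winner in the two tails: as $y \to \pm\infty$ the limit of $g_{\vec x}(y)$ is $0$ or $\pm 1$ according to the signs of $w$ and $w_{\vec x}$, and the point whose logit line pulls away fastest from the base line on a given side dominates there. Generically this yields one winner at $+\infty$ and a different one at $-\infty$, hence two revealed points; the single-winner situation should force all crossing abscissae $y^\star_{\vec x}$ to coincide, which is exactly configuration (2). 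I expect this tail analysis to be the main obstacle: the limits branch on the signs of the slopes, ties at the extremal value $1$ can occur when several lines have slope of sign opposite to $w$, and one must rule these out to guarantee a genuine switch of winner rather than a mere tie. Making the ``pulls away fastest'' comparison precise, and checking that conditions (1)--(3) really exhaust every single-winner configuration, is where the care is needed.
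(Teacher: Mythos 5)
Your treatment of the parallel case is correct and is essentially the paper's own argument in cleaner form: the paper likewise reduces the contest to the two extreme intercepts and solves the equal-influence equation explicitly, arriving at the same log-mean threshold that your affine function $N(t)$ encodes; the substitution $t = e^{wy}$ is a tidier way to organize that computation than the paper's direct formula for the tie point.

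The genuine gap is the non-parallel case, which you leave as a plan ("is where the care is needed") rather than a proof, and the plan itself has a flaw. First, the heuristic "the point whose logit line pulls away fastest from the base line dominates the tail" is false in the saturated regime: if the base logit $u(y) = wy - b \to +\infty$, a steeper line $2wy - b'$ gives influence $\approx |e^{-u} - e^{-v}| \approx e^{-u}$, while a \emph{parallel} line with constant offset $c$ gives influence $\approx |e^{c}-1|\,e^{-u}$; for $c > \ln 2$ the parallel point wins that tail despite pulling away not at all. The tail winner is decided by these prefactors, not by slopes, which is exactly the fiddliness you anticipated but did not resolve. Second, your route to configuration (2) is wrong in mechanism: tail behavior cannot force all crossing abscissae to coincide. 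The paper obtains (2) from the crossing observation that you state but never deploy: if a single point $\vec x^\star$ with $w_{\vec x^\star} \neq w$ maximizes $|I_{\vec y}(\cdot)|$ at \emph{every} query, then at its own crossing its influence vanishes, so every other point's influence must vanish there as well (otherwise another point is revealed), and that is precisely condition (2). The remaining subcase --- the sole winner is parallel to the base while some other point is not --- is where a tail comparison genuinely earns its keep (a parallel point cannot beat a non-parallel one at both tails, since $|e^{c}-1| \geq 1$ and $|1-e^{-c}| \geq 1$ cannot hold simultaneously); the paper instead argues that the non-parallel point's unbounded logit gap forces it to be revealed somewhere. Until these steps are assembled, the claim that (1)--(3) exhaust every single-winner configuration --- which is the actual content of the lemma --- remains unproved in your proposal.
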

\begin{proof}
	It is easy to see that in the first two situations only one point can be revealed (the one corresponding to the largest shift or largest angle at intersection). In the third case all functions in $\mathcal{F}_\mathcal{X}$ are shifts of $f_\theta$, but not all in the same direction. Only, the left most and right most shift are candidates for being revealed as they clearly dominate all other shifts. Also we assume $b_1<b<b_m$ (as soon as one shift coincidences with the original model the statement is trivially true). Some calculus reveals the condition for which the two points would have the same influence is
	\[
	y = \ln\left(\frac{-2e^b+e^{b_1}+e^{b_m}}{e^{b+b_1}+e^{b+b_m}-2e^{b_1+b_m}}\right)/w
	\],
	which is well defined when the expression inside the logarithm is positive and $e^{b+b_1}+e^{b+b_m}-2e^{b_1+b_m} \neq 0$. The former is the case for  $b < \log(\frac{1}{2}(e^{b_1}+e^{b_2})$, which also ensures the latter condition. On the other hand if $b \leq  \log(\frac{1}{2}(e^{b_1}+e^{b_m})$ the equation has no solution and so only one point can be revealed. 
	
	It remains to show that in all other cases at least two points can be revealed.
	Let's assume that there is a single $\vec x \in \mathcal{X}\colon \vec w \neq \vec w_{\vec x}$. In this case $ w_{\vec x}^T\vec y + b_{\vec x} = w^T\vec y + b $ can be solved and at the solutions $I(\vec x) = 0$ yet all other points have nonzero influence and one of them is revealed. Yet, since $ w_{\vec x'}^T\vec y + b_{\vec x} - w^T\vec y + b $ is constant for all $\vec x' \in \mathcal{X}, \vec x  \neq \vec x' $  and $| w_{\vec x}^T\vec y + b_{\vec x} - w^T\vec y + b |$  can take arbitrary values, there exists $\vec y \in R^n$ such that $\vec x = \argmax_{\vec z \in \mathcal{X}} |I_{\vec y}(\vec z)|$.
	Finally, if there are multiple points with  $\vec x \in \mathcal{X}\colon \vec w \neq \vec w_{\vec x}$ none of them can be revealed for all $\vec y$ as long as the condition in 2) is not satisfied. 
\end{proof} 

The following result states that there exist models and datasets for which every point can be revealed.
\begin{lemma}\label{lemma:upperBound}
	For every $m \in \mathbb{N}$ there exists a dataset $\cal X \subset \R^n$ with $|\cal X|=m$  and a training procedure $A$ so that any point in $\mathcal{X}$ can be revealed by example-based explanations.
\end{lemma}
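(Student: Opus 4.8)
The plan is to exhibit an explicit dataset together with a conveniently chosen training procedure $A$ for which the leave-one-out parameter perturbations are \emph{exactly} controllable, and then to hand-pick, for each training point, a query that singles it out. Since we are free to choose $A$, I would take $A$ to be a single step of gradient ascent on the log-likelihood, starting from the fixed initialization $\theta_0=(\vec 0,0)$ with a small learning rate $\eta>0$ to be pinned down at the very end. The point of this choice is that one step from a common initialization makes deletion of a point exact rather than approximate: writing $\vec g(\vec x)$ for the per-example gradient used by $A$ at $\theta_0$, the full model is $\theta=\theta_0+\eta\sum_{\vec x\in\cal X}\vec g(\vec x)$ and the leave-one-out model is $\theta_{\vec x_j}=\theta_0+\eta\sum_{i\ne j}\vec g(\vec x_i)$, so the entire perturbation collapses to $\theta-\theta_{\vec x_j}=\eta\,\vec g(\vec x_j)$. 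At $\theta_0=\vec 0$ one computes $\vec g(\vec x_j)\propto(l(\vec x_j)-\tfrac12)(\vec x_j,-1)$, so deleting $\vec x_j$ moves the parameter in the direction $(\vec x_j,-1)$, with a sign and magnitude depending only on its label.

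Next I would translate ``reveal'' into a statement about these directions. Because the logit is affine in the parameters, the two logits at a query $\poi$ differ by exactly $\langle\theta-\theta_{\vec x_j},(\poi,-1)\rangle=\eta\,(l(\vec x_j)-\tfrac12)(\vec x_j^{T}\poi+1)$, and both lie within $O(\eta)$ of the common value $\ell_\theta(\poi)$. Expanding the sigmoid to first order gives
\[
I_{\poi}(\vec x_j)=\sigma'(\ell_\theta(\poi))\,\eta\,(l(\vec x_j)-\tfrac12)(\vec x_j^{T}\poi+1)+O(\eta^2),
\]
where the positive factor $\sigma'(\ell_\theta(\poi))$ is common to all $j$. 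Since $|l(\vec x_j)-\tfrac12|=\tfrac12$ for every point, to leading order $|I_{\poi}(\vec x_j)|$ is proportional to $|\vec x_j^{T}\poi+1|$. The problem thus reduces to a purely geometric one: choose $\cal X$ so that for each $j$ there is a query $\poi_j$ at which $|\vec x_j^{T}\poi_j+1|$ strictly exceeds $|\vec x_i^{T}\poi_j+1|$ for all $i\ne j$.

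For this I would take the $m$ points in strictly convex position, e.g.\ the vertices of a regular $m$-gon placed in two coordinates of $\R^n$ (so the construction works for every $n\ge2$, matching the claim that recoverability is independent of the number of features; for $n=1$ it must fail by Lemma~\ref{lemma:lowerBound}). Each vertex $\vec x_j$ is then an exposed point of the convex hull, so there is a direction $\vec u_j$ with $\vec x_j^{T}\vec u_j$ strictly largest; taking $\poi_j=t\,\vec u_j$ for large $t$ makes $\vec x_j^{T}\poi_j+1$ positive and dominant, yielding a strict leading-order gap. Labels may be assigned so that both classes appear, since they enter only through the immaterial factor $\pm\tfrac12$.

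The main obstacle — and the only genuinely quantitative step — is controlling the $O(\eta^2)$ remainder so that the strict first-order dominance survives. I would make the gap explicit: over the finitely many queries $\poi_1,\dots,\poi_m$ there is a uniform $\delta>0$ with $|\vec x_j^{T}\poi_j+1|-\max_{i\ne j}|\vec x_i^{T}\poi_j+1|\ge\delta$, while the linearization error is bounded uniformly by $C\eta^2$ on this finite set. Choosing $\eta$ small enough that $C\eta^2<\tfrac14\,\eta\,\delta\,\min_{\poi}\sigma'(\ell_\theta(\poi))$ preserves $|I_{\poi_j}(\vec x_j)|>|I_{\poi_j}(\vec x_i)|$ for all $i\ne j$, so each $\vec x_j$ is revealed by $\poi_j$ and the whole dataset is recoverable. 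One can sidestep this $\eta$-bookkeeping by using a labelling invariant under the rotation symmetry of the $m$-gon: revealing then becomes rotation-equivariant, so it suffices to exhibit a single strictly revealed vertex and let the symmetry produce the rest.
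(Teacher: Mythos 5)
Your proof is correct in substance but takes a genuinely different route from the paper's. The paper never trains anything: since $A$ is unrestricted, it simply \emph{declares} the full model to be $w=b=0$ and the leave-one-out models to be tangent lines of the parabola $x^2$ in logit space ($w_k=2k$, $b_k=k^2$), querying each tangency point $y_k=k$; the dataset itself plays no role and the construction is one-dimensional. You instead commit to a bona fide training map (one step of gradient ascent from $\theta_0=0$), observe that deletion then perturbs the parameters by exactly $\eta\,\vec g(\vec x_j)$, linearize the sigmoid, and reduce revealing to a geometric separation problem solved by the regular $m$-gon. Your route costs the $\eta$-bookkeeping but yields something more meaningful (an explicit dataset and a realistic training procedure rather than a rigged map); the paper's route buys brevity and one-dimensionality. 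On that last point, your parenthetical that $n=1$ ``must fail by Lemma~\ref{lemma:lowerBound}'' is off the mark: that lemma does not forbid revealing all points on the line (the paper's own construction is one-dimensional). What is true is that \emph{your} construction needs $n\ge 2$, since in $\R^1$ all the leading-order envelopes $|x_j y+1|$ pass through $(0,1)$ and only a few of them can ever appear on the upper envelope.

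One step of yours needs patching, and it is instructive that it is precisely the step on which the paper's own proof stumbles. Exposedness of $\vec x_j$ does \emph{not} imply that $|\vec x_j^{T}\poi_j+1|$ dominates: a point with a large \emph{negative} inner product can win in absolute value (take $\vec x_1=(1,0)$, $\vec x_2=(-10,0)$, $\vec u_1=(1,0)$). Your $m$-gon survives for a different reason: with $\vec u_j=\vec x_j$ and all points on the unit circle, Cauchy--Schwarz gives $|\vec x_i^{T}\vec x_j|<1$ for $i\ne j$ except the antipodal vertex, where the ``$+1$'' offset breaks the tie ($|1-t|=t-1<t+1$); equal norms plus the offset, not exposedness, deliver your uniform gap $\delta$. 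The paper's proof drops absolute values in exactly this way (it replaces $\max_i|L(y_k,\theta)-L(y_k,\theta_i)|$ by $\max_i f_{\theta_i}(y_k)$, valid only when all logits are nonnegative), and its construction as written actually breaks: at $y_1=1$ the deleted-point model $\theta_3$ has logit $2\cdot 3\cdot 1-9=-3$, so $|\sigma(-3)-\tfrac12|>|\sigma(1)-\tfrac12|$ and $\vec x_1$ is not revealed there (a repair keeps all tangency points within a bounded ratio). So once you justify the gap via equal norms and the offset --- or, cleaner still, via your closing symmetry remark, under which any query reveals some vertex and rotated queries reveal the rest --- your argument is sound, and unlike the paper's it needs no repair of its construction.
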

\begin{proof}
	Given that we do not have restrictions on our training procedure $A$ the claim is equivalent to the existence of a parameter $\theta$ and a series of parameters $\theta_k$ and points $y_k$ such that $\forall k,m \in \mathbb{N}\colon$
	\[
	|L(\vec y_k, \theta) - L(\vec y_k, \theta_k)| = \max_{i \in [m]} |L(\vec y_k, \theta) - L(\vec y_k, \theta_i)| 
	\] 
	W.l.o.g. $n=1$, let $w = b = 0$ and $\forall k\colon l(y_k)=0$, then  
	\begin{align*}
	&|L(\vec y_k, \theta) - L(\vec y_k, \theta_k)| = \max_{i \in [m]} |L(\vec y_k, \theta) - L(\vec y_k, \theta_i)| \\
	\Leftrightarrow &L(\vec y_k, \theta_k) =  \max_{i \in [m]} |L(\vec y_k, \theta) - L(\vec y_k, \theta_i)\\
	\Leftrightarrow &f_{\theta_k}(\vec y_k) =  \max_{i \in [m]} f_{\theta_i}(\vec y_k) \Leftrightarrow w_k y_k -b_k  =  \max_{i \in [m]} w_i y_k -b_i
	\end{align*} 
	The above condition is satisfied for $w_k=2k$ , $y_k =k$ and $b_k=k^2$, as this describes the series of tangents of the strictly convex $x^2$ function.
\end{proof}

\subsubsection{Algorithm to recover the training set by minimizing the influence of already discovered points}\label{app:reconstructionAlg}
We construct a practical algorithm with which the attacker can iteratively reveal new points. 
Algorithm~\ref{alg:linearReconstruction}  consists of two main steps 
\begin{inparaenum}[(1)] 
	\item Sample a point in the current subspace and 
	\item find a new subspace in which all already discovered points have zero influence, and continue sampling in this subspace.
\end{inparaenum}

\begin{algorithm}[b!]
	\begin{algorithmic}[1] 
		\small
		\State $\linearlyIndependent \gets \True$
		\State $q  \gets 0$
		\State $R_0 \gets \R^n$
		\State $\revealedVectors \gets []$
		\While{\linearlyIndependent}
		\State $\vec y \gets \pickRandomPointIn(R_q)$
		\State $\vec x \gets \argmax_{\vec z \in \mathcal{X}}(|I_{\vec y}(\vec z)|)$ \label{line:argmax}
		\State $\revealedVectors\dotappend(\vec x)$
		\State $\vectorsNotFound \gets \getBasisOf(R_q)$ \label{line:BeginFind}
		\State $\spanningVectors \gets [\vec y ]$
		\While{$\vectorsNotFound\ne \emptyset$}
		\For{$\vec v \in \vectorsNotFound$} 
		\If{$\vec x = \argmax_{\vec z \in \mathcal{X}}(|I_{\vec y +\epsilon \vec v }(\vec z)|)$ } 
		\State $\spanningVectors\dotappend(\vec y  + \epsilon \vec v)$
		\EndIf
		\EndFor
		\State $\epsilon \gets \frac{\epsilon}{2}$
		\EndWhile \label{line:EndFind}
		\If{$\exists r \in \R\colon \forall \vec v \in \spanningVectors\colon |I_{\vec y +\epsilon \vec v }(\vec x)| = r $} 
		\State $\linearlyIndependent \gets \False$
		\Else 
		\State $R_{q+1} \gets \{\vec y \in R_q |\,| I_{\vec y}(\vec x)| = 0 \}$
		\State $q \gets q +1$
		\EndIf
		\EndWhile
		\State \Return \revealedVectors
	\end{algorithmic}
	\caption{\small Recovering the training set by minimizing the influence of already recovered points}
	\label{alg:linearReconstruction}
\end{algorithm}

 We note that for an small enough $\epsilon$ the algorithm needs $\sum_{i=0}^l n-i+1$  queries to reveal $l$ points. In our implementation we use $n$ queries per point to reconstruct each revealed point over $\mathbb{R}^n$ and instead of solving the set of equations exactly we solve a least squares problem to find regions with zero influence, which has greater numerical stability.   

Theorem~\ref{lemma:linearAlgorithm} offers a lower bound on the number of points that Algorithm \ref{alg:linearReconstruction} discovers. The theorem holds under the mild assumption on the sampling procedure, namely that when sampling from a subspace $R$ of dimension $k$, the probability of sampling a point within a given subspace $Q$ of dimension $<k$ is 0. This assumption is fairly common, and holds for many distributions, e.g. uniform sampling procedures. 
We say that a set of vectors $Q$ is $d$-wise linearly independent if every subset $D \subseteq Q$ of size $d$ is linearly independent. For a dataset $\mathcal X \subset \R^n$ and a training algorithm $A$, we say that the model parameters $\theta = (\vec w,b)$ are induced by $\mathcal{X}$ and write $(\vec{w},b) \triangleq  A(\mathcal{X})$.

\begin{restatable}{theorem}{linearAlgorithm}\label{lemma:linearAlgorithm}
Given a training set $\mathcal X \subset \R^n$, let $\theta = (\vec w,b)$ be the parameters of a logistic regression model induced by $\mathcal{X}$.  
Let $d \in \mathbb N$ be the largest number such that the vectors in 
\[
{W = \{ \vec w_{\vec x}| \vec x \in \mathcal{X}, (\vec{w}_{\vec x},b_{\vec x}) \triangleq  A(\mathcal{X} \backslash \{\vec x \})\}}
\] are $d$-wise linearly independent, then if all vectors in $W$ are linearly independent of $\vec w$, Algorithm~\ref{alg:linearReconstruction} (with query access to model predictions and example-based explanations) reveals at least $d$ points in $\cal X$ with probability 1.  	
\end{restatable}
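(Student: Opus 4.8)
The plan is to exploit the fact that, for logistic regression, the locus where a revealed point has zero influence is an affine hyperplane, so that each restriction step of Algorithm~\ref{alg:linearReconstruction} cuts the search region down by exactly one dimension. First I would record the structural observation that drives everything: writing $\sigma(t)=\tfrac{1}{1+\exp^{-t}}$, we have $I_{\vec y}(\vec x)=\sigma(\vec w^T\vec y-b)-\sigma(\vec w_{\vec x}^T\vec y-b_{\vec x})$ (up to a sign depending on $l(\vec y)$, which is immaterial under the absolute value), and since $\sigma$ is strictly monotone and hence injective, $I_{\vec y}(\vec x)=0$ holds exactly on the affine hyperplane $H_{\vec x}=\{\vec y:(\vec w-\vec w_{\vec x})^T\vec y=b-b_{\vec x}\}$ with normal $\vec n_{\vec x}\triangleq\vec w-\vec w_{\vec x}$. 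Thus the update $R_{q+1}\gets\{\vec y\in R_q:|I_{\vec y}(\vec x)|=0\}$ is precisely $R_q\cap H_{\vec x}$, the intersection of the current affine subspace with a hyperplane whose normal is $\vec n_{\vec x}$.

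The core lemma I would prove is that the normals of distinct revealed points are linearly independent: if $\vec x^{(0)},\dots,\vec x^{(q)}$ are distinct points of $\mathcal X$ with $q+1\le d$, then $\{\vec n_{\vec x^{(0)}},\dots,\vec n_{\vec x^{(q)}}\}$ is linearly independent. Suppose $\sum_i c_i(\vec w-\vec w_{\vec x^{(i)}})=\vec 0$. If $\sum_i c_i=0$ then $\sum_i c_i\vec w_{\vec x^{(i)}}=\vec 0$; the vectors $\vec w_{\vec x^{(i)}}$ are $q+1\le d$ distinct elements of $W$, hence linearly independent by the $d$-wise independence hypothesis, forcing every $c_i=0$. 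If $\sum_i c_i\ne0$ then $\vec w=(\sum_i c_i)^{-1}\sum_i c_i\vec w_{\vec x^{(i)}}$ lies in the span of at most $d$ vectors of $W$, contradicting the hypothesis that $\vec w$ is linearly independent of $W$. This is the step that converts the assumptions on $W$ and $\vec w$ into the geometric statement that every new cut genuinely lowers the dimension.

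Then I would induct, maintaining the invariant that after $q\le d$ iterations the algorithm has appended $q$ distinct points, $R_q$ is affine of dimension $n-q$, and every revealed $\vec x^{(i)}$ ($i<q$) satisfies $R_q\subseteq H_{\vec x^{(i)}}$, so $I_{\vec y}(\vec x^{(i)})\equiv0$ on $R_q$. At step $q<d$ the $d$-wise independence supplies an unrevealed point $\vec x$ whose weight vector extends $\{\vec w_{\vec x^{(i)}}\}_{i<q}$ to $q+1\le d$ distinct elements of $W$, so by the core lemma $\vec n_{\vec x}\notin\mathrm{span}(\vec n_{\vec x^{(0)}},\dots,\vec n_{\vec x^{(q-1)}})$ and $H_{\vec x}\cap R_q$ is a proper codimension-$1$ affine subset of $R_q$; by the sampling assumption the drawn $\vec y\in R_q$ lies outside it with probability $1$, whence $I_{\vec y}(\vec x)\ne0$ and $\max_{\vec z}|I_{\vec y}(\vec z)|>0$. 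Since the old points are identically zero on $R_q$, the $\argmax$ point $\vec x^{(q)}$ must be new, and the core lemma applied to the $q+1$ distinct revealed points gives $\vec n_{\vec x^{(q)}}\notin\mathrm{span}(\vec n_{\vec x^{(0)}},\dots,\vec n_{\vec x^{(q-1)}})$, so $R_{q+1}=R_q\cap H_{\vec x^{(q)}}$ is nonempty of dimension $n-q-1\ge n-d\ge0$, restoring the invariant. Moreover $I_{\cdot}(\vec x^{(q)})$ is a non-constant real-analytic function on $R_q$ (zero on a proper hyperplane, nonzero elsewhere), so its restricted gradient at $\vec y$ is nonzero off a null set; with probability $1$ some spanning direction then changes $|I_{\vec y}(\vec x^{(q)})|$, the termination test $|I_{\vec y+\epsilon\vec v}(\vec x^{(q)})|\equiv r$ fails, and the loop advances to iteration $q+1$.

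Iterating through $q=0,\dots,d-1$ reveals $d$ distinct points with probability $1$. The hard part is that the algorithm never chooses which point the $\argmax$ returns, so I must argue that \emph{whatever} it returns is simultaneously new and carries a normal independent of all previous ones; the induction supplies newness, newness caps the number of involved weight vectors at $q+1\le d$, and at that point $d$-wise independence of $W$ together with $\vec w\notin\mathrm{span}$ of any such $d$ vectors forces the normals apart. This is exactly what rules out the two failure modes — an empty restriction $R_{q+1}=\emptyset$ and a premature trigger of the termination test — before $d$ points have been collected, and the only remaining, purely probabilistic, content is discharging the null exceptional events via the stated sampling assumption (as satisfied, e.g., by uniform sampling).
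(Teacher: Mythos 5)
Your proof is correct at the paper's own level of rigor, and it follows the same inductive skeleton as the paper's proof: maintain nested affine subspaces $R_q$ of dimension $n-q$ on which every previously revealed point has identically zero influence, show that a random query in $R_q$ reveals a fresh point with probability $1$, and cut one dimension per round. Within that skeleton you make two genuinely different choices. First, where the paper proves freshness by bounding the dimension of the set where \emph{all} influences vanish (claiming it is at most $(n-d)$-dimensional), you exhibit a single unrevealed witness whose zero-influence hyperplane meets $R_q$ properly; more importantly, you isolate and actually prove the linear-algebra lemma that the normals $\vec w - \vec w_{\vec x}$ of up to $d$ distinct revealed points are linearly independent. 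The paper uses this fact only implicitly (its $(n-d)$- and $(n-(q+1))$-dimension claims require it), and your lemma is the only place where the hypothesis that $\vec w$ is independent of $W$ is visibly consumed; you also verify that $R_{q+1}$ is nonempty, which the paper silently assumes. Second, you treat the update $R_{q+1}\gets\{\vec y\in R_q : |I_{\vec y}(\vec x)|=0\}$ as a primitive operation and instead analyze the termination test, whereas most of the paper's ``statement (b)'' is about implementability: the attacker must \emph{compute} this set from query access alone, by reconstructing $|I_{\cdot}(\vec x)|$ on $R_q$ from the $n-q+1$ points collected by the inner loop, with a measure argument handling ties in the $\argmax$. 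Your analyticity argument for why the test does not fire prematurely is cleaner than anything the paper says on that point, but it leaves a small residual gap: a nonzero directional derivative at $\vec y$ guarantees differing values only for all sufficiently small $\epsilon$, while the algorithm records a point at the first $\epsilon$ of the halving schedule at which the $\argmax$ condition holds, so ruling out an exact coincidence $|I_{\vec y+\epsilon\vec v}(\vec x)|=|I_{\vec y}(\vec x)|$ at that particular $\epsilon$ needs one further (routine) measure-zero argument --- a gap comparable to, not worse than, the paper's own hand-waving about its inner loop.
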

\begin{proof}
	We need to show two statements \begin{inparaenum}[(a)] \item given an affine subspace $R_q$ of dimension $n-q$ in which the influence of the first $q<d$ points is zero, with probability 1 a new point is revealed and \item Algorithm~\ref{alg:linearReconstruction} constructs a new subspace $R_{q+1}$ in which the influence of the first $q+1$ points is zero, with probability 1.\end{inparaenum}
	
	For $q=0$ $R_q=\R^n$ and statement (a) is trivially true: querying any point will reveal a new point. 
	Now, when $q>0$ and the attacker queries a point $\vec y \in R_q$, the only reason that no new point is revealed is that for all points $\vec x \in \mathcal{X}$ we have $I_{\vec y}(\vec x) = 0$. Note that $I_{\vec y}(\vec x) = 0$ iff $-\vec w^T \vec y + b = -\vec w^T_{\vec x} \vec y + b_{\vec x}$. 
	Thus, if $I_{\vec y}(\vec x) = 0$ for all $\vec x \in \cal X$, then for all $\vec x_1, \vec x_2 \in \mathcal{X}$, $-\vec w^T_{\vec x_1} \vec y + b_{\vec x_1} = -\vec w^T_{\vec x_2} \vec y + b_{\vec x_2}$. 
	By the $d$-wise independence assumption of $\vec w_i$ this can only happen in an $n-d$-dimensional subspace of $Z \subset \R^n$. 
	The probability of sampling a point in the intersection of this subspace with $R_q$ is 0 as $\dim(R_q \cap Z) \leq n-d < n-q = \dim(R_q)$.
	
	To prove Statement~(b) we first note that the attacker can recover $\theta$ (i.e. the original function) by querying $f_\theta$ for $n+1$ suitable --- i.e., $n$ of these points must be linearly independent, and the last one is different but otherwise arbitrary point --- points and solving the corresponding linear equations. Knowing $\theta$, the attacker can recover $|\theta_{\vec x}|$ for a vector $\vec x \in \mathcal{X}$, given the values of $|I_{\vec y_i}(\vec x)|$ for $\vec y_i \in \R^n, i \in [n+1]$ suitable points. 
	Similarly, the attacker can reconstruct the entire behavior of $|I_{\vec y_i}(\vec x)|$ on an $n-q$ dimensional affine subspace $R_q$ from $n-q+1$ suitable points.  
	
	Lines~\ref{line:BeginFind} to \ref{line:EndFind} in Algorithm~\ref{alg:linearReconstruction} find these points. Since $|I_{\vec y}(\vec x)|$ is continuous, the $\argmax$ in Line~\ref{line:argmax} is constant in a small region around $\vec y$, as long as there is no tie. Hence, $\epsilon$ will eventually become sufficiently small so that all points $\vec y + \epsilon v$ reveal the same point as $\vec y$. 
	If there is a tie for the $\argmax$ at $\vec y$, let $S$ be a open set of points around $\vec y$ such that $\forall \vec y' \in S\colon |\argmax_{\vec z \in \mathcal{X}}(|I_{\vec y}(\vec z)|)|>1$. Now either $\dim S = n-q$ in which case there exists an $\epsilon$-ball $B$ that is a subset of $S$ around $y$ such that all points in $B$ reveal the same point as $\vec y$, or $\dim S < n-q$ in which case the probability of selecting $S$ or any point in it is zero. 
	So, with probability 1, Algorithm \ref{alg:linearReconstruction} finds $n-q+1$ suitable points that all reveal the same $\vec x \in \mathcal{X}$.  
	From this the attacker can determine  $|I_{\vec y}(\vec x)|$ for all points $\vec y \in R_q$. In particular, the attacker can recover $\{\vec y \in R_q : | I_{\vec y}(\vec x)| = 0 \}$  which, again by the assumption of $d$-wise independence, is an $n-(q+1)$-dimensional affine subspace of $R$.
\end{proof}
 
 Algorithm~\ref{alg:linearReconstruction} recovers large parts of the dataset for high-dimensional data. 
If $d \geq |\mathcal{X}|$, it can recover the entire dataset. For $d \ll |\mathcal{X}|$, only a small part of the dataset will be recovered. Algorithm~\ref{alg:linearReconstruction} is optimal in the sense that there are datasets for which only the points discovered by the algorithm can be discovered (see Figure~\ref{fig:bound_illustration} (Left)). However, there are many situations where the algorithm falls short of discovering all possible points (see Figure~\ref{fig:bound_illustration} (Right)). Furthermore, the algorithm does not exploit the fact that in practical situations the $k$ most influential points would be revealed, instead of just one. In fact, incorporating the information about the $k$-most influential points into Algorithm~\ref{alg:linearReconstruction} yields negligible benefits.   
The following heuristic offers no theoretical guarantees, but works well even when $d \ll |\mathcal{X}|$, as we show next. 

\subsubsection{Querying Revealed Points}\label{sec:heuristicAttack}
This heuristic is relatively simple: use previously revealed points to reveal new points. When querying a point the \citeauthor{koh2017understanding} measure returns $k$ training points (and their influence) as explanations. 
The attacker can now use these revealed points in new queries, obtaining new points from these queries. 
This naturally defines an \emph{influence graph} structure over the training set: Every point in the training set is a node $v$ in a graph $G$, with $k$ outgoing edges towards the $k$ nodes outputted by the \citeauthor{koh2017understanding} measure. 
The influence graph structure governs the degree to which one can adaptively recover the training set. 
For example, if the graph contains only one strongly connected component, an attacker can traverse (and recover) the entire training set from a single starting point. The following metrics are particularly informative:

\noindent\textbf{Number of strongly connected components (SCCs):} a high number of SCCs implies that the training set is harder to recover: an adaptive algorithm can only extract one SCC at a time. It also implies that the underlying prediction task is fragmented: labels in one part of the dataset are independent from the rest. 

\noindent\textbf{Size of the SCCs:} large SCCs help the attacker: they are more likely to be discovered, and recovering just some of them already results in recovery of significant portions of the training data.

\noindent\textbf{Distribution of in-degrees:} the greater a node's in-degree is, the likelier its recovery; for example, nodes with zero in-degree may be impossible for an attacker to recover. Generally speaking, a uniform distribution of in-degrees makes the graph easier to traverse.

\begin{figure}
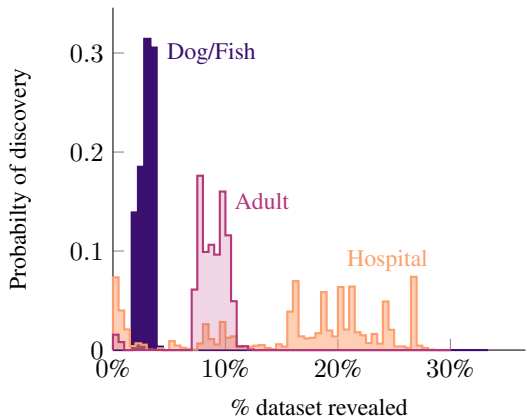

		\centering
		\if\compileFigures1
		\input{figure_scripts/fig_heuristic_attack}
		\else
		\includegraphics[]{fig/\filename-figure\thefiguerNumber.pdf}
		\stepcounter{figuerNumber}
		\fi
		\caption{Distribution over the size of the revealed training data, starting from a random point in the influence graph. This is obtained by averaging over $10$ initiations of the training set. Each time roughly the size of the largest SCC is recovered. For the \hospital dataset the size of the SCC varies the most, hence the multimodal distribution.}
		\label{fig:heuristicAttack}
\end{figure}
\begin{table}
		\centering
			\begin{tabular}{l l l l l}
				\toprule
				\multirow{2}{*}{Dataset} 	&\multirow{2}{*}{$n$} & \multirow{2}{*}{$|\mathcal{X}|$} & \# points  & \multirow{2}{*}{\% of $|\mathcal{X}|$} \\
				& & & recovered\\
				\midrule
				\fishdog 	&2048& 1,800 		   & 1790 & 99.4 \\
				\adult  	&104 & 2,000 		   & 91.5 & 4.6  \\
				\hospital   &127 & 2,000           & 81.1 & 4.0  \\
				\bottomrule
		\end{tabular}
		\caption{The number of points recovered using our attack based on subspace reduction. For small sized, high dimensional data, notably \fishdog, the attack can recover (almost) the entire dataset.}
		\label{tab:spaceReduction}
\end{table}
\begin{table}
			\begin{tabular}{l l l l}
				\toprule
				& \fishdog & \adult & \hospital  \\
				\midrule
				\#SCC 	 	 		& 1709	& 1742	& 1448	 \\
				\#SCC of size 1 	& 1679 	& 1701 	& 1333 	 \\
				Largest SCC size 	& 50 	& 167 	& 228 	 \\
				Max in-degree 		& 1069 	& 391 	& 287 	 \\
				\#node in-degree=0 	& 1364 	& 1568 	& 727 	 \\
				\bottomrule
		\end{tabular}
		\caption{Some key characteristics of the influence graphs induced by the example-based explanations (for $k = 5$). This is averaged over $10$ random initializations of the training set}
		\label{tab:keyGraphCharacteristics}
\end{table}

\section{Dataset Reconstruction for Example-Based Explanations}\label{sec:dataset-reconstruction}
We evaluate our reconstruction algorithms on the same datasets as the membership inference for example-based explanations. In this section we describe the results for the two approaches described above. In Appendix~\ref{app:RecordBaseline} we compare them to some general baselines.

\subsection{Attack based on subspace reduction}
Table~\ref{tab:spaceReduction} summarizes the results from our attack based on Algorithm~\ref{alg:linearReconstruction}. For \fishdog, we recover (nearly) the \emph{entire} dataset. The number of recovered points for \adult and \hospital is small compared to the size of the dataset, but especially for \adult close to the dimensionality of the data, which is the upper bound for the algorithm. In our actual implementation, rather than constructing the subspaces and sampling points in them we recovered the entire weight vectors and sampled points by solving a least squares optimization problem with some additional constraints to increase numerical stability. 

\subsection{Adaptive heuristic attack}
Table~\ref{tab:keyGraphCharacteristics} summarizes some key characteristics of the influence graphs for the training datasets (in this section we set $k=5$, larger values of $k$ lead to similar results). 
Remarkably, the influence graphs are fragmented, comprising of a large number of connected components, most of which have size 1. 
Each graph seems to have one larger SCC, which most of the smaller SCCs point to. This implies that an attacker starting at a random point in the graph is likely to recover only slightly more than the largest SCC. 
In fact, a significant amount of points (up to 75\% for the \adult  dataset) have in-degree 0: they cannot be recovered by this adaptive attack.
Figure~\ref{fig:heuristicAttack} illustrates the percentage of the dataset recovered when traversing the influence graph from a random starting point. 
For the \fishdog dataset, the largest SCC is revealed every time. For the \adult and \hospital datasets the largest SCC is not revealed for some points, resulting in limited success.

The order of queries does not affect the overall success of the attack, eventually every point reachable from the initial query will be revealed. However, minimizing the number of initial queries used is desirable.The number of queries required to traverse the discoverable part of the influence graph is stable under changes to the query schedule (e.g. BFS, DFS, random walk and several influence-based heuristics resulted in similar performance). 
In order to benchmark the performance of our attacker, who has no knowledge of the influence graph structure, we compare it to an omniscient attacker who knows the graph structure and is able to optimally recover the SCCs; this problem is known to be NP-complete (the best known constant factor approximation factor is 92)  \cite{daligault2009}. We thus compare our approach to a greedy omniscient attacker, which selects the node that is connected to the most unknown points.
Compared to this baseline, our approaches require roughly twice as many queries.

	\subsection{Baseline Attack}\label{app:RecordBaseline}
\begin{figure}
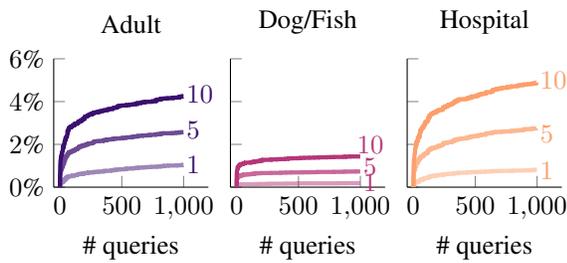
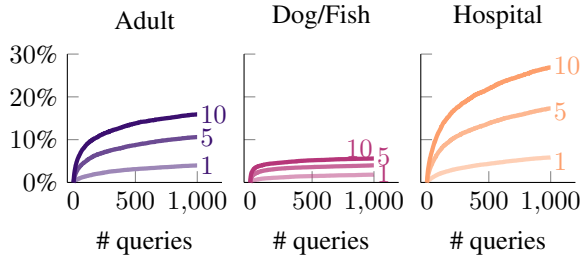
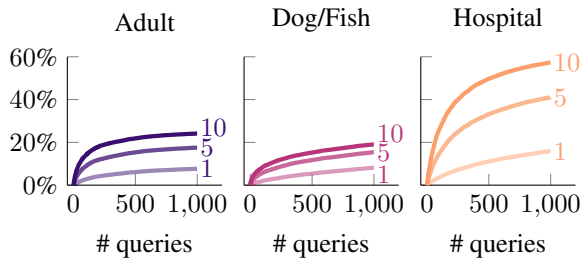

		\centering
	\begin{subfigure}[t]{\columnwidth}
		\centering		
		\if\compileFigures1
		\input{figure_scripts/fig_naive_dataset_reconstruction_uniform}
		\else
		\includegraphics[]{fig/\filename-figure\thefiguerNumber.pdf}
		\hspace{-0.4cm}
		\stepcounter{figuerNumber}
		\includegraphics[]{fig/\filename-figure\thefiguerNumber.pdf}
		\hspace{-0.4cm}
		\stepcounter{figuerNumber}
		\includegraphics[]{fig/\filename-figure\thefiguerNumber.pdf}
		\hspace{-0.4cm}
		\stepcounter{figuerNumber}
		\fi
		\caption{Uniform datapoint sampling. \label{fig:uniform}}
	\end{subfigure}
	\begin{subfigure}{\columnwidth}
		\centering
		\if\compileFigures1
		\input{figure_scripts/fig_naive_dataset_reconstruction_marginal}
		\else
		\includegraphics[]{fig/\filename-figure\thefiguerNumber.pdf}
		\hspace{-0.4cm}
		\stepcounter{figuerNumber}
		\includegraphics[]{fig/\filename-figure\thefiguerNumber.pdf}
		\hspace{-0.4cm}
		\stepcounter{figuerNumber}
		\includegraphics[]{fig/\filename-figure\thefiguerNumber.pdf}
		\hspace{-0.4cm}
		\stepcounter{figuerNumber}
		\fi
		\caption{Marginal feature distribution sampling (for the \fishdog dataset points are sampled in the latent space).\label{fig:marginal}}	
	\end{subfigure}
	\begin{subfigure}{\columnwidth}
		\centering
		\if\compileFigures1
		\input{figure_scripts/fig_naive_dataset_reconstruction_original}
		\else
		\includegraphics[]{fig/\filename-figure\thefiguerNumber.pdf}
		\hspace{-0.4cm}
		\stepcounter{figuerNumber}
		\includegraphics[]{fig/\filename-figure\thefiguerNumber.pdf}
		\hspace{-0.4cm}
		\stepcounter{figuerNumber}
		\includegraphics[]{fig/\filename-figure\thefiguerNumber.pdf}
		\hspace{-0.4cm}
		\stepcounter{figuerNumber}
		\fi
		\caption{True point distribution sampling.\label{fig:actual}}
	\end{subfigure}
	\caption{\small \% of training data revealed by an attacker using different sampling techniques, with $k \in \{1,5,10\}$ explanation points revealed per query.}
	\label{fig:naive-attacker}
\end{figure}
This section discusses some baseline attacks for reconstructing the target dataset. Each {\em baseline} attack model generates a static batch of transparency queries, i.e. new queries are not based on the attacker's past queries.
An attacker who has some prior knowledge on the dataset structure can successfully recover significant chunks of the training data; in what follows, we consider three different scenarios. 
\paragraph{Uniform samples}
With no prior knowledge on data distributions, an attacker samples points uniformly at random from the input space; this attack model is not particularly effective (Figure~\ref{fig:uniform}): even after observing 1,000 queries with 10 training points revealed per transparency query, less than 2\% of the \fishdog dataset and $\sim 3\%$ of the \hospital dataset are recovered. 
Moreover, the recovered images are unrepresentative of the data: since randomly sampled images tend to be white noise, the explanation images offered for them are those most resembling noise. 
\paragraph{Marginal distributions}
In a more powerful attack scenario, the attacker knows features' marginal distributions, but not the actual data distribution. 
In the case of images, the marginal distributions of individual pixels are rather uninformative; in fact, sampling images based on individual pixel marginals results in essentially random images. 
That said, under the Inception model, an attacker can sample points according to the marginal distribution of the latent space features: the weights for all nodes (except the last layer) are public knowledge; an attacker could reconstruct images using latent space sampling. Figure~\ref{fig:marginal}
shows results for the \hospital dataset, and the \fishdog dataset under the inception model. This attack yields far better results than uniform sampling; however, after a small number of queries, the same points tended to be presented as explanations, exhausting the attacker's capacity to reveal additional information.
\paragraph{Actual distribution}
This attack model offers access to the actual dataset distribution (we randomly sample points from the dataset that were not used in model training). 
This reflects scenarios where models make predictions on publicly available data. 
Using the actual data distribution, we can recover significant portions of the training data (Figure~\ref{fig:actual}). 
We again observe a saturation effect: additional queries do not improve the results significantly.

\end{document}